\pdfminorversion=4
\documentclass[sigconf,natbib=false]{acmart}

\usepackage{macro}
\usepackage[backend=biber,
            style=numeric-comp,
            bibstyle=numeric,
            natbib=true,
            maxbibnames=99,
            minalphanames=3,
            maxcitenames=2,
            sorting=ynt,
            giveninits=true,            
            labeldateparts=true,
            sortcites=true,
            backref=true]{biblatex}
\addbibresource{../bibliography/ref.bib}
\addbibresource{../bibliography/conference_ref.bib}

\DefineBibliographyStrings{english}{%
  backrefpage = {\hspace{-0.0500cm}},%
  backrefpages = {\hspace{-0.0500cm}},%
}
\usepackage{multirow, makecell}
\usepackage{graphicx}
\usepackage{booktabs} %
\usepackage{enumitem}
\usepackage{wrapfig}
\usepackage{subcaption}
\usepackage{balance}
\usepackage{algorithm,algorithmicx,algpseudocode}

\AtBeginDocument{%
  \providecommand\BibTeX{{%
    \normalfont B\kern-0.5em{\scshape i\kern-0.25em b}\kern-0.8em\TeX}}}

\copyrightyear{2024}
\acmYear{2024}
\setcopyright{rightsretained}
\acmConference[KDD '24]{Proceedings of the 30th ACM SIGKDD Conference on Knowledge Discovery and Data Mining}{August 25--29, 2024}{Barcelona, Spain}
\acmBooktitle{Proceedings of the 30th ACM SIGKDD Conference on Knowledge Discovery and Data Mining (KDD '24), August 25--29, 2024, Barcelona, Spain}\acmDOI{10.1145/3637528.3671835}
\acmISBN{979-8-4007-0490-1/24/08}

\makeatletter
\gdef\@copyrightpermission{
  \begin{minipage}{0.3\columnwidth}
   \href{https://creativecommons.org/licenses/by/4.0/}{\includegraphics[width=0.90\textwidth]{./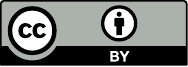}}
  \end{minipage}\hfill
  \begin{minipage}{0.7\columnwidth}
   \href{https://creativecommons.org/licenses/by/4.0/}{This work is licensed under a Creative Commons Attribution International 4.0 License.}
  \end{minipage}
  \vspace{5pt}
}
\makeatother

\begin{document}

\title{Scalable Multitask Learning Using Gradient-based Estimation of Task Affinity}

\author{Dongyue Li}
\affiliation{%
  \institution{Northeastern University}
  \city{Boston}
  \country{USA}
}
\email{li.dongyu@northeastern.edu}

\author{Aneesh Sharma}
\affiliation{%
  \institution{Google}
  \city{Mountain View}
  \country{USA}
}
\email{aneesh@google.com}

\author{Hongyang R. Zhang}
\affiliation{%
  \institution{Northeastern University}
  \city{Boston}
  \country{USA}
}
\email{ho.zhang@northeastern.edu}

\renewcommand{\shortauthors}{Dongyue Li, Aneesh Sharma, \& Hongyang R. Zhang}

\begin{abstract}
Multitask learning is a widely used paradigm for training models on diverse tasks, with applications ranging from graph neural networks to language model fine-tuning. Since tasks may interfere with each other, a key notion for modeling their relationships is {\em task affinity}. This includes pairwise task affinity, computed among pairs of tasks, and higher-order affinity, computed among subsets of tasks. Naively computing either of them requires repeatedly training on data from various task combinations, which is computationally intensive. We present a new algorithm \acronym{} that can estimate task affinities without this repeated training.

The key idea of \acronym{} is to train a ``base'' model for all tasks and then use a linearization technique to estimate the loss of the model for a specific task combination. The linearization works by computing a gradient-based approximation of the loss, using low-dimensional projections of gradients as features in a logistic regression to predict labels for the task combination. We show that the linearized model can provably approximate the loss when the gradient-based approximation is accurate, and also empirically verify that on several large models. Then, given the estimated task affinity, we design a semi-definite program for clustering similar tasks by maximizing the average density of clusters.

We evaluate \acronym's performance across seven datasets, including multi-label classification on graphs, and instruction fine-tuning of language models. Our task affinity estimates are within 2.7\% distance to the true affinities while needing only 3\% of FLOPs in full training. On our largest graph with 21M edges and 500 labeling tasks, our algorithm delivers estimates within 5\% distance to the true affinities, using only 112 GPU hours. Our results show that \acronym{} achieves excellent performance and runtime tradeoffs compared to existing approaches.
\end{abstract}

\begin{CCSXML}
<ccs2012>
<concept>
<concept_id>10010147.10010257.10010258.10010262</concept_id>
<concept_desc>Computing methodologies~Multi-task learning</concept_desc>
<concept_significance>500</concept_significance>
</concept>
</ccs2012>
\end{CCSXML}

\ccsdesc[500]{Computing methodologies~Multitask Learning}
\ccsdesc[500]{Computing methodologies~Neural Networks}

\keywords{Multitask learning; Task Affinity Estimation; Task Grouping; Efficiency}

\maketitle
\section{Introduction}

Modern applications of neural networks often employ a single neural network for prediction or classification on multiple tasks. This multitask learning setup is widely used across a variety of settings, with examples such as a visual system that aims to detect various objects in autonomous driving simultaneously \cite{standley2020tasks}, a Graph Neural Network for community detection on large networks \cite{li2023boosting}, and prompt-tuning of pre-trained LLMs for NLP tasks \cite{longpre2023flan}. This multitask learning setup is not only computationally efficient (a single network can jointly predict many tasks), but it often improves prediction accuracy due to transfer learning.

The often implicit assumption behind multitask modeling is that there is a {\em positive} transfer effect among tasks \cite{ben2002theoretical}. However, as the number of tasks increases, one frequently observes a {\em negative transfer} effect in many applications, such as for prompt tuning of large language models, where adding a task to the model degrades performance on one or more tasks \cite{ wu2019understanding,wu2020generalization,yang2020precise,vu2021spot}. This observation has motivated a line of work that aims to group the tasks into subsets such that negative transfer among tasks within a subset is minimized, allowing one to train a separate multitask model per subset, thereby improving performance on all tasks \cite{li2023boosting}.

A key concept underlying many multitask learning algorithms is a notion of {\em task affinity}, which can capture the abovementioned positive or negative transfer effects across tasks precisely. For instance, one can compare pairwise task affinity \cite{standley2020tasks,fifty2021efficiently}---the loss of a model trained on each pair of tasks---against the loss of a model trained on each task. Given a notion of task affinity, a common recipe for designing multitask learning algorithms involves (1) {\em Task affinity computation} that builds a task affinity matrix, then (2) {\em task grouping} that uses this task affinity matrix to group tasks with positive transfers together, and finally (3) {\em multitask training} that fits a separate model per task group.

The performance improvement achieved through this paradigm depends on the task affinity and the grouping procedure. Moreover, the ability to leverage this paradigm hinges on the computation of task affinity (Step 1 above), which becomes expensive as the number of tasks grows. As a case in point, the computational complexity of pairwise task affinity scales quadratically with the number of tasks: this implies that even for community detection with 100 labelings, using pairwise task affinity requires training nearly 5000 models for computing the affinity matrix.

In this paper, we scale up this multitask learning paradigm by dramatically speeding up the first step of task affinity computation for two canonical examples of task affinities: Pairwise and higher-order task affinity (See Examples \ref{ex_pairwise}, \ref{ex_ho}). In our experiments on various real-world datasets representing different applications, our algorithm can reduce the task affinity computation time by nearly 32$\times$ compared to full model training while incurring less than $2.7\%$ error. In addition to this dramatic efficiency improvement, we also design a more robust method for task grouping (Step 2). Together, these new techniques match or improve the performance of previous multitask models.

\begin{figure}[!t]
    \centering
    \includegraphics[width=0.350\textwidth]{./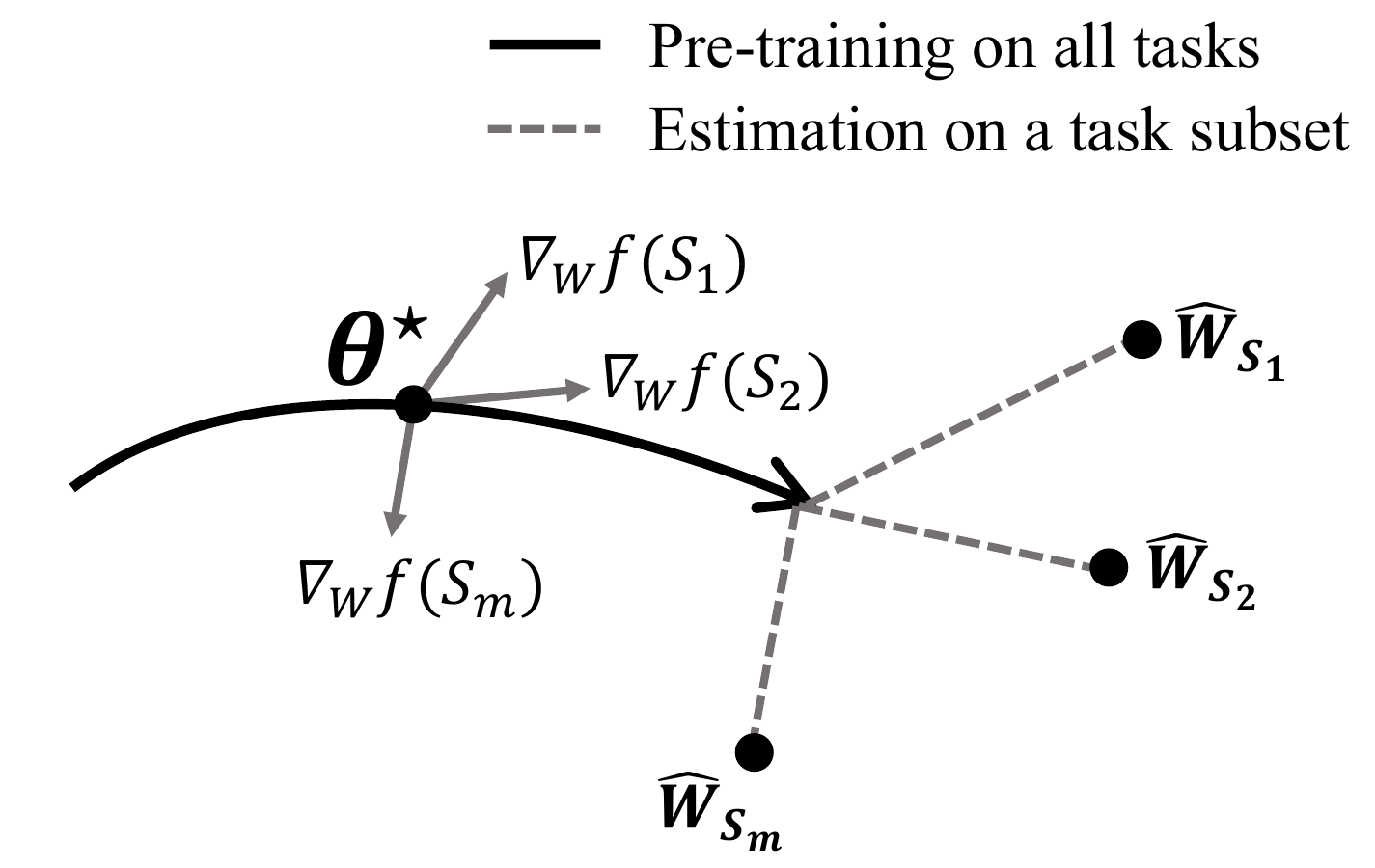}
    \vspace{-0.15in}
    \caption{Visualization of the gradient-based model approximation step in our \algo~algorithm, where we replace multitask training with a regression-based estimation of model parameters fine-tuned on a particular subset of tasks.}
    \label{fig_intro_fig}
    \vspace{-0.15in}
\end{figure}

The primary challenge for task affinity computation is avoiding training many multitask models on various task combinations. The key technical insight behind our algorithm is to leverage a {\em linearization} property of deep neural networks, including large language models. The linearization property for a neural network means we can approximate the model loss for a pre-trained meta-initialization and an input/output pair by using a gradient-based Taylor's expansion centered at the meta-initialization.
This linearization property has been observed for large language model fine-tuning in recent works, albeit not for the purpose of multitask learning \cite{malladi2023kernel,wei2023ntk,malladi2023fine}. Here, we leverage linearization to estimate task affinities in an efficient manner by using the first-order Taylor expansion from a pre-trained model, thereby saving the computation of backpropagation during model fine-tuning. This algorithm, \algo, is illustrated in Figure~\ref{fig_intro_fig}.

In more detail, we first compute the gradient at the initialization and then map the gradients to task labels with logistic regression.
The dimension of this regression can be high, especially for heavily parameterized models.
Thus, we use a dimension reduction technique and apply the Johnson-Lindenstrauss Lemma to give an error analysis.
On experiments of datasets with 100 tasks, we show that this approach estimates pairwise task affinity with \textbf{45}$\times$ fewer FLOPs and \textbf{11}$\times$ less GPU hours than fully computing the true scores, with only 5.7\% relative error. For higher-order task affinity, our approach uses \textbf{32}$\times$ fewer FLOPs and \textbf{5}$\times$ less GPU hours, with only 2.7\% relative error.
Furthermore,  our approach also scales to a large-scale graph with over 21M edges and 500 tasks. It estimates the task affinities within 5\% relative errors with 112.3 GPU hours, while computing the true affinity scores can take over 8000 GPU hours.  
Our algorithm is also suitable for accelerating task selection methods that are typically computationally expensive. An example is forward or backward subset selection \cite{hastie2009elements}, which is a popular heuristic but requires evaluating quadratically many task combinations.

As for the second step, we design a new clustering algorithm that uses these estimated task influences through a semi-definite programming (SDP) relaxation formulation. The clustering algorithm takes the estimated task affinity matrix $T$ (of size $n \times n$) \& the number $k$ of task groups as input, then solves an SDP for maximizing the average density of the $k$ groups. Since the SDP is a convex program, it can be solved efficiently, and we round the resulting solution to get the final task groups. Our experiments indicate that our clustering algorithm is more robust and performant than commonly used clustering techniques such as spectral clustering \cite{ng2001spectral} and Lloyd's algorithm \cite{lloyd1982least}. Once we have the task groups from the clustering, we can partition the tasks into subsets and train a separate model on tasks within each subset --- this overall algorithm is called \acronym.
Experiments show that our approach achieves the Pareto optimum regarding error rate and computation cost. For multi-label prediction on graphs trained with a $3$-layer GNN, \acronym~achieves comparable performance with over four baselines while using \textbf{32}$\times$ fewer FLOPs and \textbf{5}$\times$ less GPU hours. For instruction tuning of language models using T5-Base, \acronym~uses \textbf{48}$\times$ fewer FLOPs and \textbf{11}$\times$ less GPU hours with comparable performance to the best baseline.
The code repository for reproducing our experiments can be found at \textcolor{black}{\url{https://github.com/VirtuosoResearch/ScalableMTL}}.

\smallskip
\noindent\textbf{Summary of Contributions:}
We design an efficient algorithm, \algo, for estimating the task affinity scores of a multitask learning algorithm. The key idea of \algo{} is to trade off multitask pre-training, which is computationally expensive, with gradient-based estimation for fine-tuning, whose computation is lightweight. We then design a clustering algorithm on top of the estimation procedure for downstream multitask optimization. Through a detailed experimental study, we demonstrate that our overall algorithm, \acronym{}, significantly speeds up full model training while delivering comparable performance.

\smallskip
\noindent\textbf{Organization:}
We briefly touch on related work and then provide the technical preliminaries for the rest of the paper. In section~\ref{sec_approach}, we outline our task affinity estimation procedure \algo, along with a theoretical error analysis for the estimation error. Then, we present the clustering approach for task grouping and the overall algorithm \acronym~in Section~\ref{sec_evaluation}. Finally, we provide a thorough empirical evaluation of the \acronym~algorithm for various multitask learning settings in Section~\ref{sec_exps}.

\subsection{Related Work}

Multitask learning is a fundamental problem with many applications, such as federated learning \cite{smith2017federated}, road safety modeling \cite{nippani2024graph}, and language model fine-tuning \cite{longpre2023flan}. This problem has been studied since the early literature of data mining \cite{ben2002theoretical}.
As the number of tasks increases, modeling task relationships becomes increasingly complex and challenging \cite{ma2018modeling,zhu2021transfer}.
These relationships are influenced by data distribution characteristics, including covariate and label shifts \cite{wu2019understanding}.
Thus, designing optimization algorithms for multitask learning is challenging \cite{li2023identification,li2023boosting}.
\emph{We contribute to this literature by proposing a new approach to significantly speed up the computation of task affinity scores for modeling task relationships.}
We now proceed to discuss several lines of work that are most related to ours.

\smallskip
\noindent\textbf{Task Similarity Measures.}
Previous works \cite{standley2020tasks,fifty2021efficiently} estimate task affinities between every pair of tasks. The computation complexity of such methods scales quadratically with the number of tasks. Another approach is to use task embeddings \cite{vu2021spot}, i.e., training one model on each task and measuring the cosine similarity between the model weights. Although this approach scales linearly with the number of tasks, the measures tend to be noisy.
Intuitively, if two tasks are similar, their gradients should exhibit higher cosine similarity.
This idea can be implemented to balance training by dynamically tuning gradient magnitudes \cite{chen2018gradnorm}, or to project the gradients noto the span of other tasks' gradients that have a conflicting gradient \cite{fifty2021efficiently}.
The same idea can also be implemented to choose auxiliary tasks most beneficial for a primary task \cite{dery2021auxiliary}.
Similarity measures based on feature representations of tasks have also been applied to grouping tasks
\cite{sherif2023stg} and used to predict task transferabilities \cite{ayman2023task}.  
The main advantage of these approaches is their efficiency, as only a single multitask model needs to be trained.
The downside is that the gradients can be noisy during a stochastic training procedure.
For example, \citet{azorin2023s} empirically observed that representation and gradient similarity measures do not consistently correlate with actual MTL performance. Thus, a more accurate approach is to build measures that approximate multitask outcomes directly; see recent work on designing surrogate models for multitask learning systems \cite{li2023identification,li2023boosting}.

\smallskip
\noindent\textbf{Transferability Estimation.}
There have also been developments on information theoretic measures of transferability in recent literature.
One natural idea is to evaluate conditional entropy between target pseudo labels (assigned by a pretrained source model) and real target label \cite{bao2019information}.
Log Expected Empirical Predictor \cite{nguyen2020leep} proposes a modified procedure using soft predictions from the source model.
These methods do not utilize feature embeddings in the measure \cite{vu2020exploring}; TransRate \cite{huang2022frustratingly} introduces a surrogate measure based on mutual information that also incorporates feature embeddings. 
An improved estimation method with better robustness can be achieved by shrinkage \cite{ibrahim2022newer}. 
In the fine-tuning setting, the distance between the model search and the pretrained initialization can indicate the level of generalization capability \cite{li2021improved}.
The geometry relates to the Hessian of the loss, which has been shown to correlate with the generalization performance of fine-tuned models \cite{ju2022robust}.
\citet{ju2023generalization} extend this Hessian measure to graph neural networks, which can guide the design of optimization algorithms to regularize the Hessian of neural networks \cite{ju2023noise}. 

\smallskip
\noindent\textbf{Multitask Learning Optimization Algorithms.} 
Multitask learning can be viewed as a multiobjective optimization problem \cite{papadimitriou2000approximability}, where the goal is to identify the Pareto frontier among multiple objectives \cite{sener2018multi}.
One common MTL optimization algorithm is to reweight task losses and optimize a weighted combination of task losses \cite{liu2022auto, royer2024scalarization}.
Our goal is to maximize the averaged prediction performance of all tasks. Thus, we are interested in partitioning the tasks into similar groups, where tasks are closely related within each group and can differ significantly across groups.
Another interesting line of work is designing branching neural networks such as tree structures \cite{vandenhende2019branched,guo2020learning}, where each layer contains multiple modules to handle different tasks \cite{lu2017fully}.
Compared with branching methods, task grouping may be more suitable for handling many tasks (like hundreds to thousands). In this regime, negative interference between tasks is almost unavoidable, and clustering tasks into similar groups could provide a more efficient strategy than designing a single neural network that handles all tasks.

\smallskip
\noindent\textbf{Influence Functions.}
There is a line of work estimating the influence of adding or removing one sample on the whole dataset.
Influence functions \cite{koh2017understanding} based on efficient approximation of the Hessian inverse provide one way to approximate this.
Random sampling-based approaches to measuring leave-one-out influence have also been studied \cite{ilyas2022datamodels,park2023trak}.
The distinction between these works and us is we focus on task-level affinity, whereas this literature focuses on estimating the influence of a single data sample.

\smallskip
\noindent\textbf{Clustering Algorithms.}
Clustering is a fundamental aspect of machine learning. Besides SDP relaxations, linear programming relaxations are known for clustering objectives such as $k$-center. 
The integrality gap of linear programming and semidefinite programming relaxations can be analyzed when there is a separation structure in the underlying clusters \cite{awasthi2015relax}.
These approximation guarantees typically require the underlying similarity scores to satisfy a metric condition. By contrast, the task affinity matrix can easily violate the triangle inequality.
Recent work has also studied mixed integer programming for best subset selection \cite{bertsimas2016best}.
One novel contribution of this work is to make explicit a connection between multi-instruction fine-tuning and clustering.
In light of this connection, it would also be interesting to revisit hierarchical clustering and hypergraph clustering for task grouping.
For example, recent work by \citet{tsitsulin2023graph} investigates unsupervised graph clustering problems with graph neural networks.

\section{Preliminaries}\label{sec_prelim}

Suppose we are interested in making predictions on $n$ tasks. We are given a set of samples for training and testing each task. We aim to design a prediction algorithm to maximize the averaged testing performance over all the $n$ tasks simultaneously. We assume that the samples from all the tasks are supported on a joint product between a $p$-dimensional feature space $\cX$ and a label space $\cY$. To precisely discuss task relationships, we formally define what we mean by a \emph{multitask learning algorithm}.

\begin{definition}[Multitask learning algorithms] For any subset $S \subseteq \set{1, 2, \dots, n}$, a multitask learning algorithm $f$ takes the training data of all the tasks in $S$ and combines them in a joint training procedure. Then, the (jointly trained) model is tested on each task $t \in S$. In the end, a test result is obtained for each $t$. Let us denote the test result as $f(S, t)$. Thus, the algorithm's output will include $\abs{S}$ results for any subset $S$, one for each $t \in S$.
\end{definition}

Given a multitask learning algorithm, the transfer between the $n$ tasks can be viewed through the results of $f$, applied to combinations of tasks as subsets.
This notion of transfer underlies many existing multitask learning systems.
We give two examples below, which are used in prior works to tackle task transfer in complex visual systems \cite{zamir2018taskonomy,standley2020tasks}.

\begin{example}[Pairwise task affinity]\label{ex_pairwise}
    Consider two tasks such as $i$ and $j$.
    Given a multitask learning algorithm $f$, one can mix the training data of tasks $i, j$, using SGD to train a shared encoder and task-specific prediction heads.
    If we compute the pairwise task affinity for all pairs of tasks $1 \le i \le j \le n$, then we get an $n$ by $n$ task affinity matrix $T$, where $T_{i, j} = f(\set{i, j}, i)$.
\end{example}

\begin{example}[High-order task affinity]\label{ex_ho}
    Next, we discuss higher-order task affinity, analogous to sampling features in random forests.
    First, fix an integer $m$, which is the number of subsets we would like to sample (e.g., analogous to the number of decision trees in a random forest).
    We independently sample $m$ subsets out of the set $\set{1, 2, \ldots, n}$, each subset having a size of $\alpha$, chosen uniformly over all such subsets.
    Let us denoted the $m$ subsets as $S_1, S_2, \ldots, S_m$.
    Then, compute $f(S_k, j)$, for every $k = 1, 2, \dots, m$, and $j = 1, \dots, \alpha$. 
    Lastly, compute $T_{i, j}$ as the average value of $f$ among all subsets including tasks $i,j$:
    \begin{align}
        T_{i, j} = \frac{1}{n_{i,j}} \sum_{ 1\leq k \leq m:\ i \in S_k, j \in S_k} f\big(S_k, i\big), \text{ for all } 1 \leq i, j \leq n, \label{eq_higher_order_affinity}
    \end{align}
    where $n_{i,j}$ is the number of subsets that include both $i, j$. 
    This leads to another task affinity matrix $T$, which better captures the higher-order relationship among tasks.
\end{example}

In both examples, computing the task affinity matrix requires fitting $\Omega(n)$ models, given $n$ tasks.
In Example \ref{ex_pairwise}, one needs to train $\binom{n} 2$ models, one for every pair of tasks.
Then, in Example \ref{ex_ho}, a total of $m = \Omega(n \log n)$ models are required, each for a subset of tasks.
This raises the question of whether one can approximate the results of a multitask learning algorithm by designing a more efficient computational method.

Specifically, given a multitask learning algorithm $f$ and a collection of subsets $S_1, S_2, \dots, S_m  \subseteq \set{1, \ldots, n}$, can we quickly estimate the task affinity corresponding to $f(S_i, j)$, for any $i = 1, 2, \dots, m$ and any $j \in S_i$ quickly (e.g. without fully training a model for each subset)?
Do these task affinity estimates accurately approximate the affinity one would get from fully trained models? Moreover, are the estimates useful in the downstream task grouping setup?

\section{Task Affinity Estimation}\label{sec_approach}

We now describe a new method for estimating task affinity scores.
To circumvent the cost of full-model training, we describe an empirical observation regarding pre-training and fine-tuning.
Then, we present our approach to estimating fine-tuned model parameters for task subsets.
Additionally, we use random projection to reduce the dimension of the gradients.
We provide an error analysis to justify the design of our algorithm.

\subsection{Linearization of Fine-tuned Models}

Our method is motivated by the fact that once we pre-train all the $n$ tasks to obtain a meta-initialization, this initialization can provide representations quickly adapted to the remaining tasks.
This is based on the premise that the underlying tasks share structural similarities in multitask learning. 
As the model fine-tuned to a subset of tasks stays in the affinity of the initiation, the fine-tuning procedure behaves like linear models locally.

To illustrate this observation, we consider three scenarios involving graph neural networks (GNNs) and transformers (BERT and T5).
We test GNNs on a multi-label prediction dataset on a YouTube graph \cite{yang2012defining}, using a 3-layer SIGN network \cite{frasca2020sign}.
This dataset includes $n = 100$ subtasks, one corresponding to the node labels of a subgraph of the whole graph.
For transformers, we take a pretrained BERT model and fine-tune it on a sentence classification dataset \cite{yu2020fine}, which contains $n = 26$ tasks.
We also use a pretrained T5-Base model and fine-tune it on a sentence classification dataset with $100$ instructions \cite{bach2022promptsource}, which has $n=100$ tasks. In each experiment, we first train a meta-initialization $\theta^{\star}$ by training on all tasks combined. Then, we fine-tune $\theta^{\star}$ on a random subset of the tasks.

We perform Taylor's expansion with $\theta^{\star}$ as the anchor point.
Let $W$ denote the fine-tuned weight.
Denote the model with $W$ and $\theta^{\star}$ as $f_W$ and $f_{\theta^{\star}}$, respectively.
For an input $x$ with label $y$, denote the output of the fine-tuned model as $f_W(x, y)$.
If $W$ is close to $\theta^{\star}$, $f_W(x, y)$ can be approximated by 
\begin{align}\label{eq_taylor}
    f_W(x, y) \approx f_{{\theta}^{\star}}(x, y) + \big[\nabla_W f_{\theta^{\star}}(x, y)\big]^{\top} (W - \theta^{\star}) + \epsilon.
\end{align}
We measure the error term $\epsilon$ and report the Residual Sum of Squares (RSS) in Table \ref{table_compare_approximation_error}:
\[ \frac{  \bignorm{f_W(x, y) - f_{\theta^{\star}}(x, y) - \nabla_W f_{\theta^{\star}}(x, y)^{\top}(W - \theta^{\star})}^2 }{\bignorm{f_{W}(x, y)}^2 }. \]
In particular, we fine-tune the meta-initialization to a subset of tasks to get weight $W$.
Then, we measure the fine-tuned distance as $\frac{\bignorm{W - \theta^{\star}}} {\bignorm{\theta^{\star}}}$.
Interestingly, our results show that the gradient-based approximation yields within 3.5\% RSS, even when the fine-tuned distance is up to 10\%.
In particular, viewing $W$ as the decision variables, Eq. \eqref{eq_taylor} is a linear model with the gradient $\nabla_W f_{\theta^{\star}}(x, y)$ as the feature vector.

\begin{table}[t!]
\centering
\caption{Measuring Taylor's expansion error for models fine-tuned from an initialization pre-trained on all tasks. The results are averaged over $100$ random task subsets.}\label{table_compare_approximation_error}
\vspace{-0.05in}
{\small\begin{tabular}{@{}cccccc@{}}
\toprule
\multicolumn{2}{c}{GNN} &  \multicolumn{2}{c}{BERT} & \multicolumn{2}{c}{T5} \\ \cmidrule(l){1-2} \cmidrule(l){3-4} \cmidrule(l){5-6} 
    Distance & RSS  & Distance & RSS & Distance & RSS \\ \midrule
    1\% &  $4.2\times10^{-4}$ & 1\% &  $3.6\times10^{-6}$  & 1\% & $3.8\times10^{-6}$\\
    2\% &  $9.5\times10^{-4}$ & 2\% & $5.4\times10^{-6}$   & 2\% & $6.0\times10^{-5}$\\
    3\% &  $1.1\times10^{-3}$ & 3\% &  $3.0\times10^{-5}$  & 3\% & $3.2\times10^{-5}$\\
    4\% &  $2.5\times10^{-3}$ & 4\% & $1.5\times10^{-4}$   & 4\% & $2.6\times10^{-4}$\\
    5\% &  $6.8\times10^{-3}$ & 5\%  & $2.2\times10^{-4}$  & 5\%  & $6.3\times10^{-4}$\\
    6\% &  $7.5\times10^{-3}$ &  6\% &  $5.7\times10^{-4}$ &  6\% & $8.4\times10^{-4}$\\
    7\% &  $9.0\times10^{-3}$ &  7\%  & $9.9\times10^{-4}$ &  7\% & $1.4\times10^{-3}$\\
    8\% &  $9.3\times10^{-3}$ & 8\% &  $9.0\times10^{-4}$  & 8\%  & $2.5\times10^{-3}$\\
    9\% &  $1.2\times10^{-2}$   & 9\%  &  $2.2\times10^{-3}$  & 9\% & $3.3\times10^{-3}$\\
    10\% & $3.4\times10^{-2}$ &  10\%     & $5.1\times10^{-3}$&  10\%  & $4.1\times10^{-3}$\\
\bottomrule
\end{tabular}}
\end{table}

\begin{remark}[Second-order approximation]
It is natural to ask if a second-order approximation can further reduce Taylor's expansion error. Notice that there is a tradeoff between approximation quality and computation cost. 
Based on our preliminary test of the Hessian approximation, it can indeed reduce estimation error; however, this requires computing Hessian-gradient products.
The premise is that the underlying tasks share a structural similarity, like in community detection, where clusters have higher densities. Our experiments found that 94\% of models fine-tuned for random task subsets remain <10\% distance to initialization (on the Youtube and RTE datasets), suggesting that the first-order approximation is generally sufficient.
\end{remark}

\subsection{Gradient-based Estimation}

We now describe our algorithm, which builds on the above linearization property, using logistic regression with gradients as features.
It also includes dimension reduction, as described below.

\smallskip
\textbf{(1) Estimating fine-tuned model parameters:}
In the following discussion, we focus on binary classification, such that $y_i \in \set{+1, -1}$.
See Remark \ref{rem_ext} for extensions to multiple classification and regression.
Recall the gradient-based approximation of $f_W(x_i, y_i)$, given the input $(x_i, y_i)$: %
\[ \nabla_W f_{\theta^{\star}}(g_i, y_i)^{\top} (W - \theta^{\star}) + f_{\theta^{\star}}(x_i, y_i) \]
Let us denote $\nabla_{W} f_{\theta^{\star}}(x_i, y_i)$ as $g_i$ and $- y_i f_{\theta^{\star}}(x_i, y_i)$ as $b_i$, for any $i$.
Using logistic loss, we can write down the loss function as
\begin{align}\label{eq_loss_logr}
    \tilde \ell_W(g_i, y_i) = \log\left( 1 + \exp\left(-y_i  g_i^{\top} (W - \theta^{\star}) + b_i \right) \right),
\end{align}
for $W \in\real^p$.
Denote the combined data set in the task subset $S$ as
\[ \cD_S = \set{(x_1, y_1), \ldots, (x_{n_S}, y_{n_S})}, \]
where $n_S$ is the combined number of data samples in the set $\cD_S$.

The main idea is to solve a logistic regression problem with $g_i$ being the feature vector and $y_i$ being the response label.
However, recall that the dimension of $g_i$ is the same as the number of parameters in a neural network, which could be tens of millions.
Thus, we introduce a dimension reduction procedure that does not lose much precision.

\textbf{(2) Dimension reduction:} %
We use the Johnson-Lindenstrauss random projection \cite{johnson1984extensions}, which projects the gradients to a much lower dimension before solving the logistic regression.
Let $P$ be a $p$ by $d$ Gaussian random matrix, whose entries are independently sampled from a Gaussian $N(0, d^{-1})$.
We project the gradient from dimension $p$ onto dimension $d$ as
$\tilde g_i = P^{\top} g_i$.
Then, we solve the following logistic regression, which is now in dimension $d$: 
\begin{align}\label{eq_logr}
    \hat W_d \leftarrow \mathop{\arg \min}_{W\in\real^d} \hat L(W) = \frac 1 {n_{S}} \sum_{i = 1}^{n_{S}} \tilde\ell_W(\tilde g_i, y_i).
\end{align}
Lastly, we set $\hat W_S$ as $P \hat W_d + \theta^{\star}$ to map the projected solution to the $p$-dimensional space.
$\hat W_S$ is the estimated model parameter for fine-tuning $\theta^{\star}$ with task subset $S$.

\textbf{(3) Averaging over an ensemble:} To reduce the above estimation's variance, we also add a model averaging step.
In particular, we train several meta-initializations and repeat the above estimation procedure.
We average the estimated scores within the ensemble.

\begin{algorithm}[t!]
\raggedright
\caption{\algo~(Gradient-based Task Affinity Estimation)}\label{alg_estimate_mtl_performance}
\textbf{Input}: A list of subsets $S_1, S_2, \dots, S_m \subseteq \set{1, 2, \dots, n}$, and their training and testing datasets\\
\textbf{Require:} Initializations ${\theta^{\star}_1, \theta^{\star}_2, \ldots, \theta^{\star}_M}$; projected dimension $d$\\
\textbf{Output}: Estimated scores $\hat f(S_i, j)$ for every $i = 1, 2, \dots, m$, $j \in S_i$
\begin{algorithmic}[1]
    \For{$k = 1, \dots, M$}
        \State Let $P$ be a $p$ by $d$ Gaussian random matrix $\sim N(0, d^{-1})$
        \State Project the gradient of every training example $(x, y)$ as \[ \tilde{g} = P^{\top} \nabla_W f_{\theta^{\star}_k}(x, y) \]
        \For{$i = 1, \dots, m$}
            \State Run logistic regression with $\set{\tilde{g}, y}$ on all the samples belong to tasks in $S_i$ to obtain $\hat W_{d}$. Let \begin{align} \label{eq_hatS} \hat W_{S_i} = \theta_k^{\star} + P \hat W_d \end{align}
            \State Evaluate $f^{(k)}_{\hat W_{S_i}}(S_i, j)$, for every $j \in S_i$
        \EndFor
    \EndFor
    \State Average over the ensemble as \[ \hat f(S_i, j) = \frac{1}{M} \sum_{k=1}^M  f^{(k)}_{\hat W_{S_i}}(S_i, j), \text{ for every } j \in S_i \]
\end{algorithmic}
\end{algorithm}

\smallskip
We summarize the entire procedure in Algorithm \ref{alg_estimate_mtl_performance} with all three steps.
Let us compare the running time complexity between this estimation and one that uses full training to get $f(S_i, j)$ instead:
\begin{itemize}[leftmargin=*]
    \item In our estimation, we need $M$ full training, plus $O(n)$ gradient evaluations and solving logistic regression $m$ times.
    \item If we were to compute $f$, we need $m$ full model training instead.
\end{itemize}
Typically, $M = O(1)$, while $m = \Omega(n)$ or even $O(n^2)$ in downstream use cases.
Thus, our estimation algorithm reduces $\Omega(n)$ full-model training to only $O(1)$.
The tradeoff is that we require $O(n)$ gradient evaluations (to retrieve the gradients on all tasks) plus solving logistic regression $m$ times.
As shown below, the random projection helps reduce the dimension of the logistic regression problem to $O(\log p)$ dimension, which is much cheaper.
This is in terms of the asymptotic complexity.
In Section \ref{sec_evaluate_task_affinity}, we materialize the constants to compare the number of FLOPs during training.

\begin{remark}[Extension to multiple classification or regression]\label{rem_ext}
    The above procedure can be extended to deal with multiple classifications.
    This requires setting up one prediction vector for each class; The rest remains the same.
    The procedure also applies to regression by using mean squared error instead.
\end{remark}

\subsection{Error Bounds}\label{sec_bound}

We now show that the error introduced by approximations in \algo{} is bounded. Specifically, we use the Johnson-Lindenstrauss Lemma to argue that as $d$ increases, the random projection yields a minimizer whose quality is not much worse than the solution without the projection.
We will assume that the averaged Taylor's expansion error is at most $\delta$ across the entire data set of every task.
Additionally, we assume the search procedure occurs within a bounded space of radius $D$.
Lastly, in the pretrained initialization, each gradient vector's Euclidean norm is at most $G$.
With these conditions, we state the error bounds for \algo~as follows.

\begin{proposition}\label{prop_jl}
    Let $\cD$ be a search space whose radius is at most $D$.
    Suppose the gradient of $f_{\theta^{\star}}$ at the initialization $\theta^{\star}$ in the training set is at most $G$ in Euclidean norm.
    For each task $i = 1, 2, \dots, n$, let $T_i$ denote the training data. Suppose that for every $i$,
    \begin{align*}
        {\frac 1 {\abs{T_i}} \sum_{(x, y) \in T_i} \abs{f_W(x, y) - f_{\theta^{\star}}(x, y) - \nabla_W f_{\theta^{\star}}(x, y)^{\top} (W - \theta^{\star}) }} \le \delta.
    \end{align*}
    Provided that $d = O\Big(\frac{\log p}{\epsilon^2}\Big)$, the training loss of ~$\hat W_S$ is bounded away from the minimum training loss for any $S \subseteq \set{1,2,\dots,n}$ as
    \begin{align}\label{eq_jl_gua}
        \hat L(\hat W_S) \le \min_{W\in\cD} \hat L(W) + 2\delta +  {4 G D } {\epsilon}.
    \end{align}
\end{proposition}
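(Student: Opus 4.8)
The plan is to sandwich the (true) training logistic loss of the estimated parameter $\hat W_S$ between its linearization and the projected objective, using three ingredients: the Taylor hypothesis to pass between the true model $f_W$ and its linearization, the Johnson--Lindenstrauss lemma to pass between the full gradient $g_i$ and its projection $\tilde g_i = P^\top g_i$, and the optimality of the logistic minimizer $\hat W_d$. To reconcile notation with the $2\delta$ term (which can only appear if true and linearized losses are compared), I read the ``training loss'' $\hat L$ in the statement as the average logistic loss built from the actual model; call it $L(W)$, write $\tilde L(W)$ for the same loss built from the linearized output $\tilde f_W(x_i,y_i) = f_{\theta^\star}(x_i,y_i) + g_i^\top(W-\theta^\star)$, and write $\hat L_d(u)$ for the $d$-dimensional objective of Eq.~\eqref{eq_logr} that $\hat W_d$ minimizes. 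Writing the logistic/softplus loss as $\sigma(t)=\log(1+e^{t})$, which is $1$-Lipschitz, the key preliminary observation is the exact identity $\tilde L(\hat W_S) = \hat L_d(\hat W_d)$: since $g_i^\top(\hat W_S-\theta^\star) = g_i^\top P\hat W_d = \tilde g_i^\top \hat W_d$, the linearized prediction at $\hat W_S\in\real^p$ equals the projected prediction at $\hat W_d\in\real^d$. This is what lets the low-dimensional minimizer be analyzed as a point in the full space.

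Fixing a subset $S$, setting $W^\star = \mathop{\arg\min}_{W\in\cD} L(W)$ and $u^\star = P^\top(W^\star-\theta^\star)$, I would then chain
\begin{align*}
L(\hat W_S) &\le \tilde L(\hat W_S) + \delta = \hat L_d(\hat W_d) + \delta \\
&\le \hat L_d(u^\star) + \delta \le \tilde L(W^\star) + \delta + 4GD\epsilon \\
&\le L(W^\star) + 2\delta + 4GD\epsilon.
\end{align*}
The first and last inequalities are the Taylor hypothesis applied at $\hat W_S$ and at $W^\star$: because $\sigma$ is $1$-Lipschitz, $|L(W)-\tilde L(W)|$ is at most the average of $|f_W-\tilde f_W|$ over $\cD_S$, which is a convex combination of the per-task Taylor errors and hence at most $\delta$. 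The middle equality is the identity above, and $\hat L_d(\hat W_d)\le \hat L_d(u^\star)$ is merely optimality of $\hat W_d$ for the projected problem. Everything therefore reduces to the penultimate inequality, where the projection error enters.

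For that step I would control the per-example prediction gap $|\hat z_i - z_i|$, where $z_i = g_i^\top(W^\star-\theta^\star)$ is the linear prediction at $W^\star$ and $\hat z_i = \tilde g_i^\top u^\star = g_i^\top PP^\top(W^\star-\theta^\star)$ is its projected counterpart. Applying the Johnson--Lindenstrauss norm-preservation guarantee to the sum and difference of $g_i$ and $W^\star-\theta^\star$ and invoking the polarization identity yields $|\hat z_i - z_i| \le c\,\|g_i\|\,\|W^\star-\theta^\star\| \epsilon \le c\,GD\epsilon$ for an absolute constant $c$ (recorded as $4$ in the statement after accounting for the two-sided estimate; a normalized argument gives a smaller constant), uniformly over all $i$ once $d = O(\epsilon^{-2}\log(\cdot))$. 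Since $\sigma$ is $1$-Lipschitz and $|y_i|=1$, each summand of $\hat L_d(u^\star)$ differs from the corresponding summand of $\tilde L(W^\star)$ by at most $|\hat z_i - z_i|$, so averaging over $\cD_S$ gives $\hat L_d(u^\star) \le \tilde L(W^\star) + 4GD\epsilon$, completing the chain.

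The main obstacle is making the Johnson--Lindenstrauss step uniform in the correct sense. A single application preserves the inner products $\langle g_i, W^\star-\theta^\star\rangle$ for one \emph{fixed} comparison direction, which needs $d = O(\epsilon^{-2}\log n_S)$; obtaining the advertised $O(\log p/\epsilon^2)$ dimension together with the quantifier ``for any $S$'' requires care, because the optimum $W^\star_S$ depends on $S$ and on the data, so one cannot naively treat it as fixed. I would handle this by observing that the relevant displacement can be taken in the span of the gradients, so preserving the pairwise geometry of the fixed gradient set (whose cardinality is polynomial in the model size, giving the $\log p$ factor) and union-bounding over gradient pairs suffices to control $\langle g_i, W^\star-\theta^\star\rangle$ for every resulting $W^\star_S$. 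A second, minor subtlety is that the Taylor bound is applied at $\hat W_S$, which presupposes $\hat W_S\in\cD$; this should be read as solving the logistic regression constrained to the radius-$D$ ball (or as extending the Taylor hypothesis to $\hat W_S$). The remaining manipulations are routine Lipschitz bookkeeping.
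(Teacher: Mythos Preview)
Your proof is correct and follows essentially the same route as the paper's: the $1$-Lipschitz property of the logistic loss combined with the Taylor hypothesis to pass between the true and linearized losses (applied once at $\hat W_S$ and once at $W^\star$, giving $2\delta$), the Johnson--Lindenstrauss lemma to control the inner-product distortion $g_i^\top(I-PP^\top)(W^\star-\theta^\star)$ (giving the $GD\epsilon$ term), and optimality of $\hat W_d$ for the projected problem. Your write-up is a bit cleaner than the paper's: where the paper introduces an auxiliary objective $h_2(W)$ built on $g_i^\top PP^\top(W-\theta^\star)$ and chains $h_1(\hat W_d)\le h_2(\overline W_p)\le h_2(W^\star)$, you observe the exact identity $\tilde L(\hat W_S)=\hat L_d(\hat W_d)$ (since $g_i^\top P\hat W_d=\tilde g_i^\top\hat W_d$), which collapses that detour into one line. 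You also correctly flag two subtleties the paper glosses over---the need for the JL guarantee to hold uniformly over the data-dependent comparison direction $W^\star_S-\theta^\star$ as $S$ varies, and the implicit requirement $\hat W_S\in\cD$ for the Taylor bound to apply at $\hat W_S$; the paper simply invokes JL on the fixed pair $(g_i,W^\star-\theta^\star)$ and is silent on the second point.
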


The proof, given in Appendix \ref{proof_prop}, uses the Johnson-Lindenstrauss Lemma \cite{johnson1984extensions}.
In particular, since the logistic loss is $1$-Lipschitz continuous, we can relate $\hat L(\hat W_S)$ to $\min \hat L(W)$.
The errors introduced by random projection and Taylor's expansion can be bounded using the JL Lemma and the bound on Taylor's expansion error, respectively.
Further, our experiments in Table \ref{table_compare_approximation_error} suggest that $\delta$ is relatively small in practice.
Thus, as $\epsilon$ goes to zero, Eq. \eqref{eq_jl_gua} guarantees the gap between $\hat L(\hat W_S)$ and $\min \hat L(W)$ will be small.

\section{Task Affinity based Grouping}\label{sec_evaluation}

We now describe a clustering algorithm to partition the $n$ tasks into $k$ disjoint subsets.
Given an $n$ by $n$ task affinity matrix $T$, we will find a clustering that maximizes the average density of all clusters.
Concretely, let $C_1, \dots, C_k$ be a disjoint partition of $[n]$. 
Let $v_1, \dots, v_k$ be a $0$-$1$ vector indicating whether a task is in one cluster.
The average density of this clustering can be written as:
\begin{align}
   \frac 1 k \sum_{i=1}^{k} \frac{v_i^{\top} T v_i}{v_i^{\top} v_i}. \label{eq_obj} %
\end{align}
This integral objective is NP-hard to optimize in general (in particular, geometric clustering is a special case \cite{aloise2009np}).

We design a Semi-Definite Programming (SDP) relaxation and then round the SDP solution to a clustering.
Let us denote the assignment variables as an $n \times k$ matrix $V$, such that each entry $V_{i,j}$ indicates whether a task $i$ belongs to a cluster $j$, for every $i = 1, \dots, n$, $j = 1, \dots, k$.
Moreover, let the $j$th column of $V$, the characteristic vector of the $j$-th cluster, be denoted as $v_j$.
Under this assignment, the sum of $V_{i, j}$ across any task $i$ must be one, as we allow one task to be assigned in a single group. By contrast, the sum of $V_{i, j}$ across $C_j$ is the number of tasks assigned to  $C_j$, which is at least one.

Let $e$ denote the all-ones vector. We state an integer program to maximize the average density of all $k$ clusters as follows
\begin{align}
    \max_{V\in\real^{n\times k}}~~&  \BigInner{T}{ \frac 1 k\sum_{j=1}^k \frac{v_j v_j^{\top}}{v_j^{\top}v_j}} \nonumber \\
    & Ve = e, \sum_{i=1}^n V_{i,j} \geq 1 \text{ for } 1 \leq j \leq k \nonumber \\
    & V_{i, j} \in \set{0, 1}, \text { for any } 1 \le i\le n, 1 \le j\le k. \label{eq_clustering_objective}
\end{align}
Note that $v_iv_i^{\top}$ is a rank-one semidefinite matrix.
Let us denote the sum of them (normalized by $v_i^{\top}v_i$) as the following new variable
\begin{align}
    X = \sum_{j=1}^k \frac{v_j v_j^{\top}}{v_j^{\top}v_j}. \label{eq_define_x}
\end{align}
$X$ has rank $k$ since it is the sum of $k$ rank-$1$ matrices, and the $v_i$'s are orthogonal to each other. Additionally, its trace is equal to $k$ because the trace of $\frac{v_j v_j^{\top}}{v_j^{\top}v_j}$ is one for any $j$.
Second, one can verify that the entries of every row of ${X}$ sum up to one.
Removing the $0$-$1$ integer constraint, we derive a rank-constrained problem as
\begin{align*}
    \max_{X\in\real^{n\times n}}~~& \inner{T}{X} \\
        & X e = e, \tr[X] = k, \textup{rank}(X) = k \\
        & X \geq 0,  X \succeq 0.
\end{align*}
Further relaxing the rank constraint (while keeping the trace constraint) leads to a convex program, which can be solved efficiently.

Given a solution of the SDP, denoted as $\hat X$, the last step is to round $\hat X$ into an integer solution.
We set a threshold $\lambda$ such that if $\hat X_{u,v} \ge \lambda$, tasks $u$ and $v$ are assigned to the same cluster.
In the experiments, we set $\lambda$ as $c/n$ for a constant $c\ge 1$, since $\hat X_{u,v}$ should be $\frac{1}{|C_i|}$ when they are in the same cluster with $|C_i|< n$.
Thus, the intra-cluster distance must always be at least $\lambda$ with the assignment.

We provide the entire procedure in Algorithm \ref{alg_approximate_grouping}, which uses Algorithm \ref{alg_estimate_mtl_performance} as a subroutine to estimate the task affinity scores. %

\begin{algorithm}[t!]
    \caption{\acronym~(Gradient-based Task Affinity Grouping)}\label{alg_approximate_grouping}
    \raggedright
    \textbf{Input}: $n$ tasks along with their training and testing datasets; number of desired clusters $k$\\
    \textbf{Require:} Number of subsets $m$ and size $\alpha$, rounding threshold $\lambda$, number of trials $M$, projected dimension $d$\\
    \textbf{Output}: A disjoint partition of $[n]$ as $\cC$\\
    \begin{algorithmic}[1]
        \State Run $f([n], \cdot)$ for $M$ times independently to obtain $\theta^{\star}_1, \dots, \theta^{\star}_M$
        \State Sample $m$ subsets of size $\alpha$ from $[n]$
        \State $\set{\hat f(S_i, j): 1\le i \le m, j \in S_i} \leftarrow$ \algo($\theta^{\star}_1, \dots, \theta^{\star}_M; d$)
        \State Construct an $n$ by $n$ affinity matrix $T$ following equation \eqref{eq_higher_order_affinity}
        \State Obtain $\hat X$ by solving problem
        \begin{align}
            \max_{X \in \real^{n\times n}}~~& \inner{T}{X} \label{eq_sdp_clustering} \\
                 & X e = e,  \tr[X] = k \nonumber \\
                 & X \geq 0, X \succeq 0.\nonumber
        \end{align}
        \State Round the solution $\hat X$ into clusters using the threshold ${\lambda}$
    \end{algorithmic}
\end{algorithm}

\begin{example}[Discussion about alternative clustering algorithms]\label{ex_planted} 
    A natural question is using alternative algorithms such as spectral clustering or Lloyd's clustering.
    We find that these algorithms are not as robust as the SDP relaxation because the scale of the loss values varies across rows for different tasks.
    We describe a toy example to illustrate. Suppose $T$ is a $6$ by $6$ matrix involving three clusters $C_1, C_2, C_3$ of size $2$ each.
    The affinity in $C_1$ is $7$, while the affinity scores in $C_2$ and $C_3$ are $20, 19$, respectively.
    We find that both spectral clustering and Lloyd's clustering will group $C_2$ and $C_3$ together, while the SDP relaxation manages to separate them.
    See Figure \ref{fig_clustering_synthetic} for an illustration.
    For this reason, we use the SDP relaxation in \acronym.
\end{example}%
\begin{figure}[!ht]
    \centering
    \begin{minipage}[b]{0.245\textwidth}
        \centering
        \includegraphics[width=0.65\textwidth]{./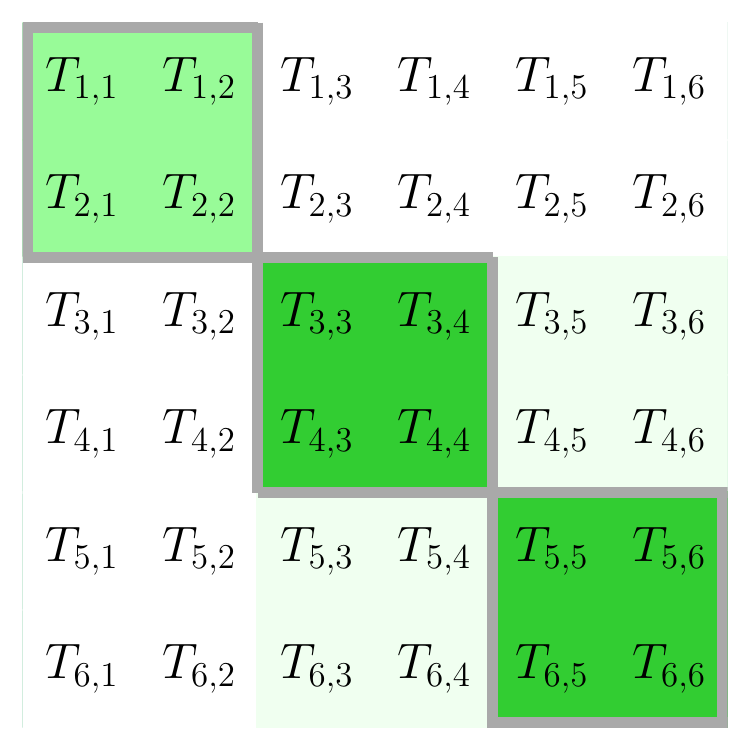}
        \vspace{-0.00in}
        \subcaption{SDP relaxation}\label{fig_sdp_synthetic}
    \end{minipage}%
    \begin{minipage}[b]{0.245\textwidth}
        \centering
        \includegraphics[width=0.65\textwidth]{./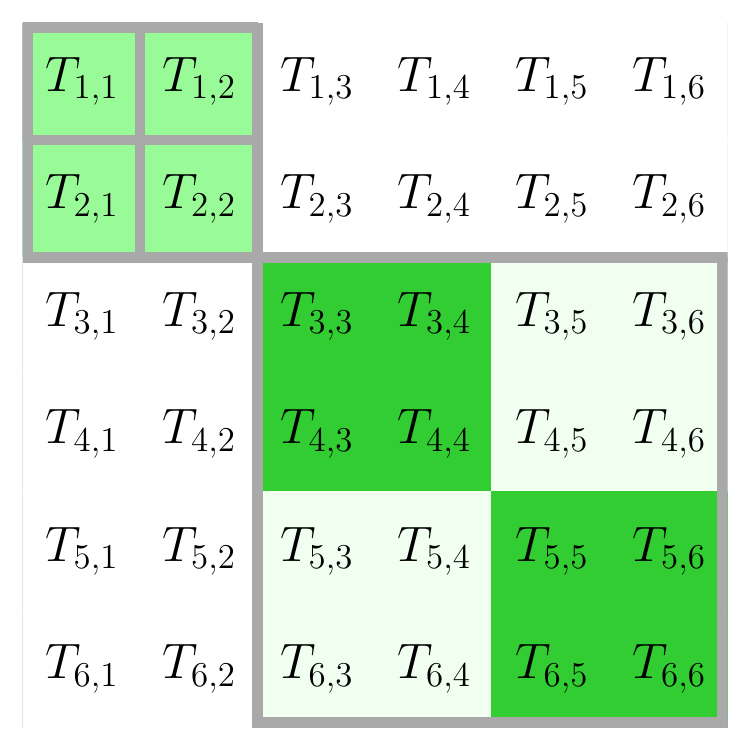}
        \vspace{-0.00in}       
        \subcaption{Spectral/Lloyd's clustering}\label{fig_sl}
    \end{minipage}
    \caption{We compare the SDP relaxation with spectral and Lloyd's clustering in a toy example.
    There are three clusters, with the second and third clusters having higher densities than the first.
    The black solid line illustrates the clusters yielded by each algorithm. 
    As shown in Fig. \ref{fig_sl}, spectral and Lloyd's clustering group the high-affinity clusters together.
    Fig. \ref{fig_sdp_synthetic} shows the SDP relaxation separates them correctly.}\label{fig_clustering_synthetic}
\end{figure}
\begin{remark}[Approximation ratio of the SDP relaxation]
    A natural question is whether one can quantify the approximation ratio of the SDP relaxation \eqref{eq_sdp_clustering}.
    Although this is a well-studied problem in approximation algorithms \cite{ailon2008aggregating}, task affinity violates the metric condition typically required to obtain guarantees in this literature.
    In particular, the triangle inequality $T_{i, j} + T_{j, k} \ge T_{i, k}$ is violated.
    It is possible that by assuming intra-cluster separation (see, e.g., \citet{awasthi2015relax}), one might be able to analyze the SDP theoretically. This is left for future work.
\end{remark}
\smallskip
\begin{remark}[Further variants of \acronym]
    While we focus on the task grouping problem, the idea can be used to speed up forward and backward selection.
    We set the list of subsets in Algorithm \ref{alg_estimate_mtl_performance} as $\set{1}, \set{2}, \dots, \set{n}$.
    Suppose we select task $3$. Then, in the next round, we set the list of subsets as $\set{3, 1}, \set{3, 2,}, \dots, \set{3, n}$.
    And so on.
\end{remark}

\section{Experiments}\label{sec_exps}

We now validate \algo{} and \acronym{} across various settings.
The evaluation focuses on the following key questions. 
Does the estimation procedure accurately approximate the target task affinity scores? How does the running time compare to the full computation required to obtain these scores?
Third, do the estimated affinity scores combined with the clustering algorithm work well in downstream use cases? 

Our experiments show that \algo{} approximates the true task affinities (based on full model training) within a relative error of less than 2.7\%, using less than 3\% of the computational cost of full training. Further, \acronym{} achieves comparable downstream accuracy to existing methods in two canonical applications, multi-label classification on graphs and language model fine-tuning, while using 32.8$\times$ fewer FLOPs. Lastly, we discuss the parameters and the steps as part of our algorithm, including the comparison with alternative clustering.

\subsection{Experimental Setup}

\subsubsection{Evaluation settings.} We note that our algorithm applies to a wide range of multitask learning scenarios. For a representative evaluation, we focus on multi-label prediction on graphs and language model fine-tuning. In the first setting, each labeling task corresponds to a subgraph within a graph. Given a seed set of each labeling as the training set, the goal is to identify the remaining nodes of the subgraph. This can be cast as multitask learning by viewing each labeling as a binary classification task. The objective is to optimize the average accuracy of all the labeling tasks.

The second setting involves fine-tuning language models using human-designed instructions, known as instruction fine-tuning. Each instruction corresponds to a prompt. Typically, a data set can come up with many relevant instructions, some of which are more relevant to a subset of tasks than others \cite{longpre2023flan}. Thus, a natural question is to select the instructions that are more relevant to the downstream task, which can be formulated using multitask learning. In particular, we view each instruction tuning as a single task.
While we focus on these two applications, it is conceivable that our algorithm can be used in other related applications.

\subsubsection{Datasets and models.}
We use social network datasets with community labels for multi-label prediction on graphs. We select four graphs from SNAP \cite{yang2012defining} (Amazon, YouTube, DBLP, and LiveJournal), while we expect similar results to hold on other graphs. The number of nodes in these four graphs ranges from 3k to 57k; the number of edges ranges from 20k to 1M. 
For each graph, we pick 100 (largest) communities corresponding to $n = 100$ tasks.
For preprocessing, we randomly sample 10\% of nodes from each community subgraph as positive training samples and 10\% of nodes outside the subgraph as negative samples. From the remaining data, 20\% is randomly sampled for validation. We evaluate performance using the macro $F_1$-score on the test set \cite{yang2013overlapping}.

Next, we examine the running time scaling of our algorithm on a large graph (the Orkut network), which has 395k nodes, 21M edges, and a total of 500 communities.
We use a 3-layer SIGN model \citep{frasca2020sign} with a fixed width of 256 as the encoder in the MTL models, which is more efficient to train than GCN.

For fine-tuning language models, we use two text classification datasets from SuperGLUE \cite{wang2019superglue}, specifically RTE and WiC. Each dataset includes 100 instructions, with ten sourced from \citet{bach2022promptsource} and 90 generated using the automatic instruction generation method in \cite{zhou2022large}. Thus, each dataset has 100 tasks, each corresponding to fine-tuning with one instruction.
We use T5-Base \cite{roberts2019exploring} as the encoder for the MTL model.
The choice of this encoder is without loss of generality, as we expect similar results to hold on other encoders.

Put together, our experiment covers seven different datasets in total, spanning medium- and large-scale instances, with the largest dataset containing $500$ tasks.

\subsubsection{Evaluation metrics.} We assess the accuracy of estimated task affinity by measuring the distance between our estimated task affinities and those computed from fully trained models.

For task grouping, we evaluate the accuracy averaged over all tasks when training a collection of networks, each on a subset of tasks.  The accuracy metric is task-dependent, such as zero-one accuracy or the $F_1$-score, depending on the setting.

Lastly, we measure each method's total number of FLoating-point OPerations, namely FLOPs. In addition, we report the number of GPU hours evaluated on a single Nvidia RTX6000 GPU.

\subsection{Task Affinity Estimation} \label{sec_evaluate_task_affinity}

We now report the results of our estimation procedure. We regard the task affinity scores computed from fully trained models as the target, denoted as ${T}^{\star}$.
Then, after running \algo, we compute the affinity matrix $T$, and measure the relative distance between $T$ and ${T}^{\star}$ as:
\[ \textup{Distance}(T, T^{\star}) = \frac {\bignormFro{T - {T}^{\star}}^2} {\bignormFro{{T}^{\star}}^2}. \]
We evaluate the relative distance on the YouTube graph, which contains $n = 100$ labeling tasks.

As for the computation cost, our procedure has three parts: (i) training $M$ meta-initializations, each on the combination of all tasks; (ii) For each meta-initialization, computing the gradients on all training examples and projecting the gradients to a lower-dimension; (iii) Solving logistic regression on projected gradients for a subset of task and evaluate the performance on each task in the subset. We report the computation in terms of FLOPs using our algorithm to compute $T$ and fully training models to compute $T^{\star}$.

\subsubsection{Accelerating pairwise task affinity computation.}

First, we train a separate multitask model on each pair of tasks to compute $T^{\star}$.
We report the distance metric and the number of FLOPs between fully-trained models (to compute $T^{\star}$) and our algorithm in Table \ref{table_compare_distance}.

To explain our findings, we set the number of meta-initializations to $M = 1$ and vary the projection dimension $d$ among 50, 100, 200, and 400.
We note that all these values yield an estimation of $T^{\star}$ within 11\% distance. As expected, increasing $d$ leads to better estimation.
After $d$ increases above $200$, the distance metric stabilizes to around \textbf{5.7\%}. Thus, we set $d$ as 200 in the remaining experiments.
As a remark, this is approximately $15\log(p)$, where $p = 683,370$ in this experiment, aligning with our analysis in Proposition \ref{prop_jl}.
Remarkably, under this setting, \algo{} uses \textbf{3.5} GPU hours and achieves \textbf{130}$\times$ less computation compared to fully-trained models!

Next, we fix $d = 200$ while increasing $M$ up to $9$. This further reduces the distance metric to \textbf{5.4}\%, with \textbf{45.0}$\times$ less compute cost. We observe diminishing returns from the ensemble once $M$ goes beyond $5$. Thus, we will set $M$ as $5$ in the remaining experiments. This uses 17.6 GPU hours and 44.9$\times$ less computation than fully-trained models.

\begin{table}[t!]
\centering
\caption{We report the distance between our estimated task affinity and $T^{\star}$, computed on the YouTube graph. For interpreting the computation cost, we report the ratio between the number of FLOPs to compute $T^{\star}$ divided by the number of FLOPs of our algorithm. Recall from Algorithm \ref{alg_estimate_mtl_performance} that $M$ is the number of meta-initializations, and $d$ is the random projection dimension. }\label{table_compare_distance} %
\vspace{-0.1in}
{\small\begin{tabular}{@{}cccccccc@{}}
\toprule
  & & \multicolumn{2}{c}{Pairwise task affinity} &  \multicolumn{2}{c}{Higher-order task affinity} \\ \cmidrule(l){3-4} \cmidrule(l){5-6}
$d$ & $M$ & Distance & Speedup  & Distance & Speedup \\ \midrule
50 & 1 & 10.8\% & 132.1$\times$  & 5.5\% & 72.4$\times$\\
100 & 1 & 10.2\% & 131.6$\times$ & 5.0\% & 72.2$\times$ \\
200 & 1 & 7.0\% & 130.4$\times$  &  3.5\% & 71.4$\times$\\
400 & 1 & 6.8\% & 128.2$\times$  &  3.4\% & 69.8$\times$\\
200 & 3 & 6.1\% & 66.9$\times$ &  2.7\% & 45.0$\times$ \\
\textbf{200} & \textbf{5} & \textbf{5.7}\% & \textbf{45.0}$\times$ &  \textbf{2.7}\% & \textbf{32.8}$\times$ \\
200 & 7 & 5.4\% & 33.9$\times$ &  2.6\% & 25.9$\times$ \\
200 & 9 & 5.4\% & 27.2$\times$ &  2.4\% & 21.3$\times$ \\
\bottomrule
\end{tabular}}
\end{table}

\begin{figure}[t!]
    \begin{subfigure}[b]{0.22\textwidth}
        \centering
        \includegraphics[width=0.9\textwidth]{./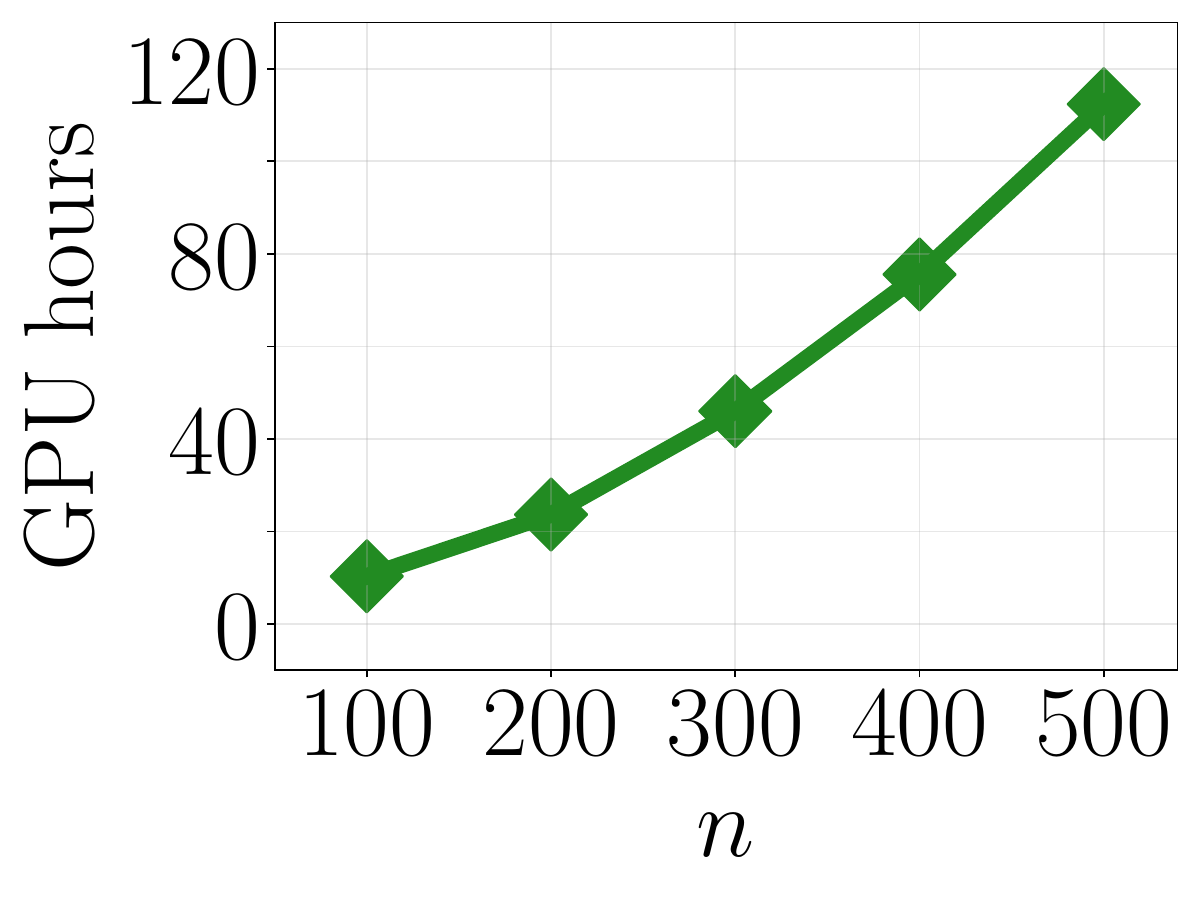}
        \vspace{-0.1in}
        \subcaption{Our estimation}
    \end{subfigure}
    \begin{subfigure}[b]{0.22\textwidth}
        \centering
        \includegraphics[width=0.9\textwidth]{./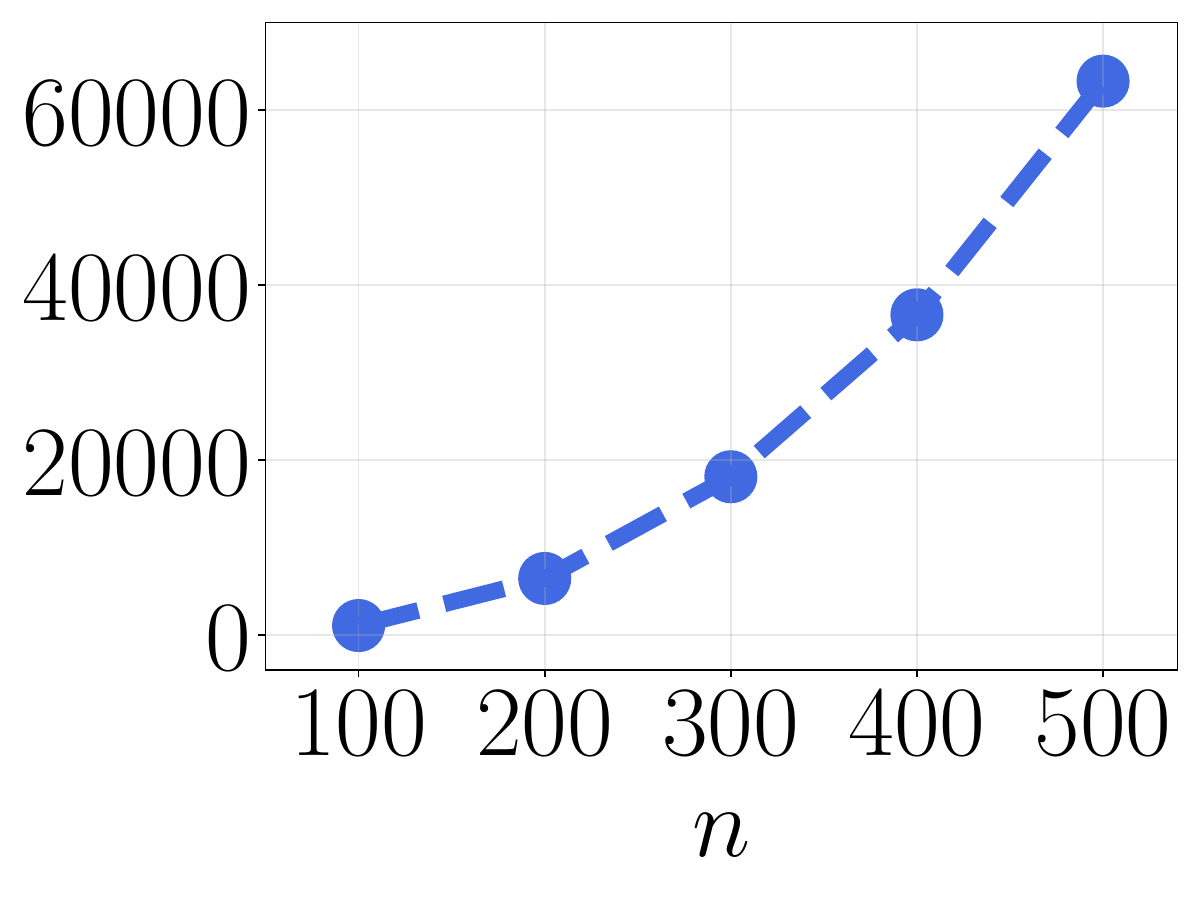}
        \vspace{-0.1in}
        \subcaption{Full training cost}
    \end{subfigure}
    \caption{The number of GPU hours vs. the number of tasks to compute pairwise affinity, evaluated on the Orkut graph up to 500 tasks. We estimate the full training cost by training on randomly sampled 2000 subsets of tasks. 
    }\label{fig_scaling}
\end{figure}

\subsubsection{Accelerating higher-order task affinity computation.}
We note qualitatively similar results for approximating higher-order task affinity matrix.
Recall this definition from equation \eqref{eq_higher_order_affinity}, Example \ref{ex_ho}.
We set $m = 2000$ so that the higher-order task affinity matrix converges while setting the subset size as $\alpha = 10$ (further ablation study will be provided in Section \ref{sec_ab}).

Using $M=1$ and $d=200$, our algorithm approximates $T^{\star}$  within \textbf{3.5}\% distance while using less than 1\% cost of computing $T^{\star}$. Further increasing $M$ to $5$, the distance drops to \textbf{2.7}\%. Again, the computation cost is only 3\% of computing $T^{\star}$.
This takes 11.9 GPU hours and uses \textbf{32.8}$\times$ less computation than fully-trained models.

\subsubsection{Accelerating task affinity computation on text and image data sets.} We have shown that \algo{} significantly reduces the computational cost in task affinity estimation. To verify that these efficiency gains are consistent across different data modalities, we apply \algo{} to a text classification dataset (RTE) and an image classification dataset (DomainNet) \cite{peng2019moment}. The RTE data set contains 100 tasks. We use T5-Base and compute higher-order task affinity with 2000 subsets of size 10. The DomainNet data set contains six tasks. We use ResNet-50 and compute higher-order task affinity with 20 subsets of size 3.
On the two data sets, our algorithm reduces computation by 42.6$\times$ and 9.5$\times$, respectively, compared to computing true higher-order task affinities while incurring less than 3\% relative error. 
The smaller speedup in the image dataset is due to the fewer total models trained on task subsets.

\subsubsection{Scaling task affinity estimation to very large instances}
Lastly, we estimate task affinities on the Orkut graph by varying $n$ from 100 to 500. We measure the distance between the estimated and the true pairwise affinity by downsampling the number of pairs to 2000. Figure \ref{fig_scaling} shows the comparison. 
We observe that our algorithm scales to as many as 500 tasks, using only $112.3$ GPU hours, which is much faster than computing $T^{\star}$.
Moreover, the relative distance to the true scores remains within 5\%.

\begin{figure*}[t!]
    \centering
    \begin{subfigure}[b]{0.49\textwidth}
    \begin{minipage}[b]{0.49\textwidth}
        \centering
        \includegraphics[width=\textwidth]{./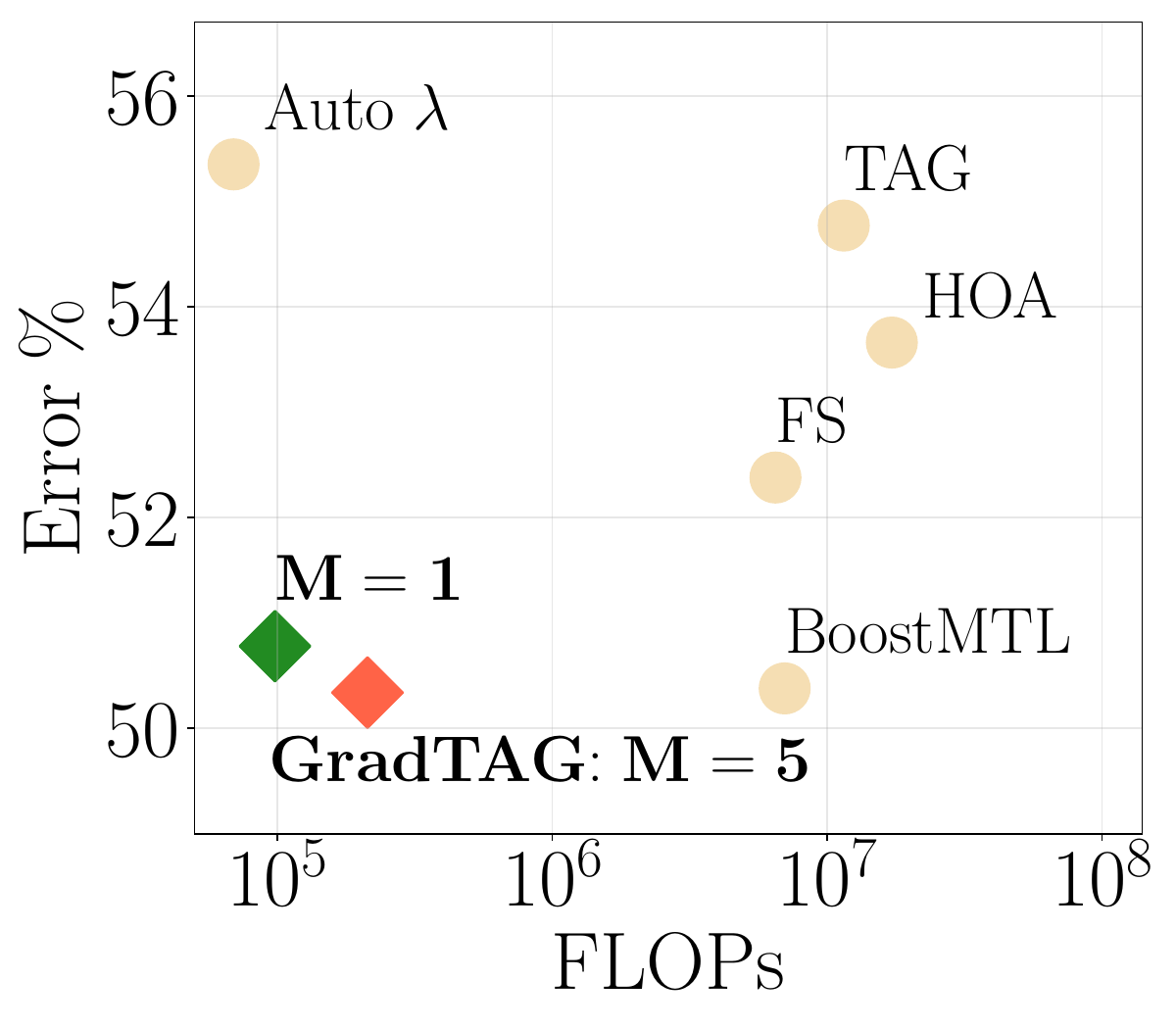}
    \end{minipage}
    \begin{minipage}[b]{0.49\textwidth}
        \centering
        \includegraphics[width=\textwidth]{./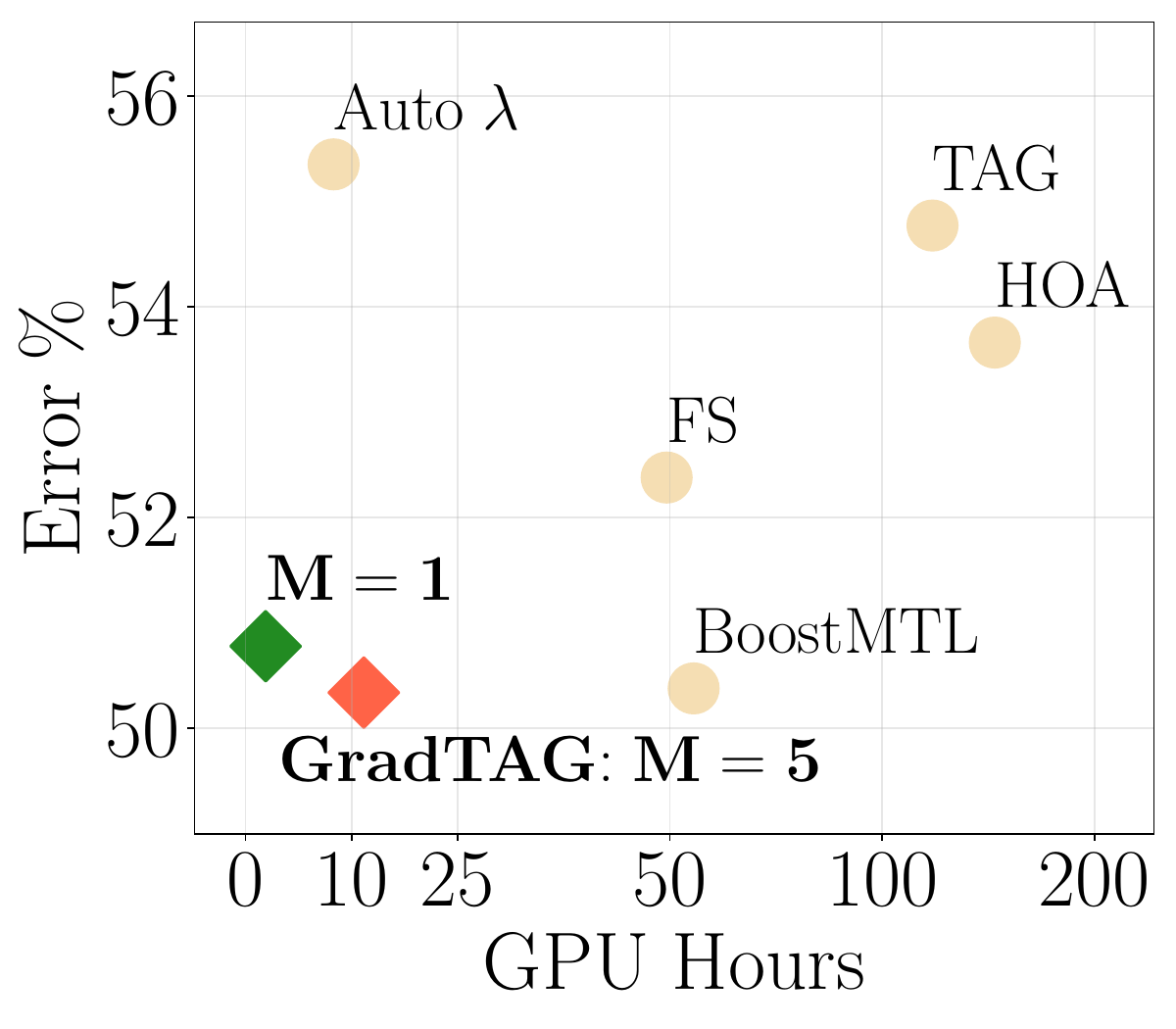}
    \end{minipage}
    \subcaption{Multi-label classification on graphs (The YouTube network)}\label{fig_compare_runtime_performance_community}
    \end{subfigure}
    \hfill
    \begin{subfigure}[b]{0.49\textwidth}
    \begin{minipage}[b]{0.49\textwidth}
        \centering
        \includegraphics[width=\textwidth]{./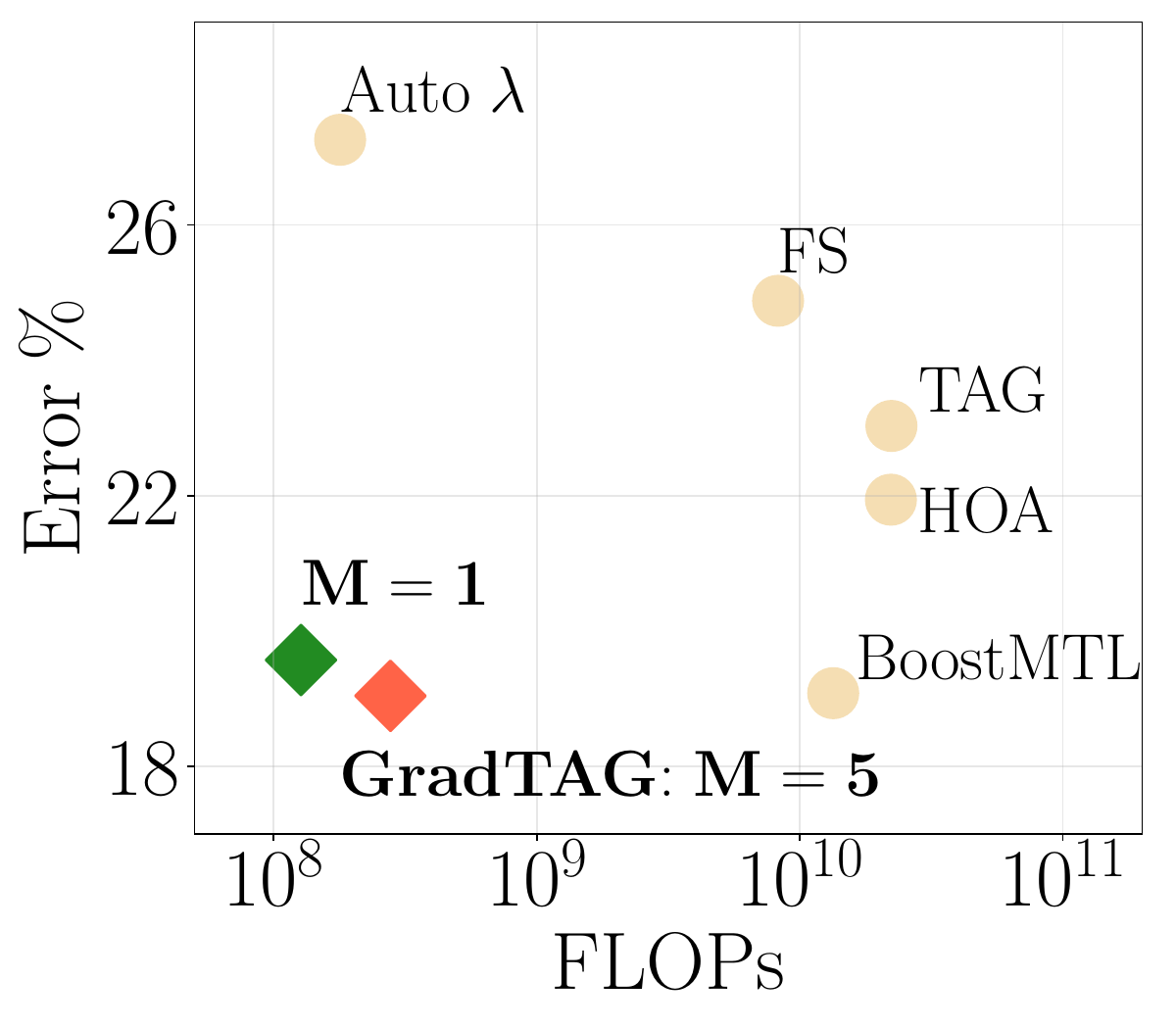}
    \end{minipage}
    \begin{minipage}[b]{0.49\textwidth}
        \centering
        \includegraphics[width=\textwidth]{./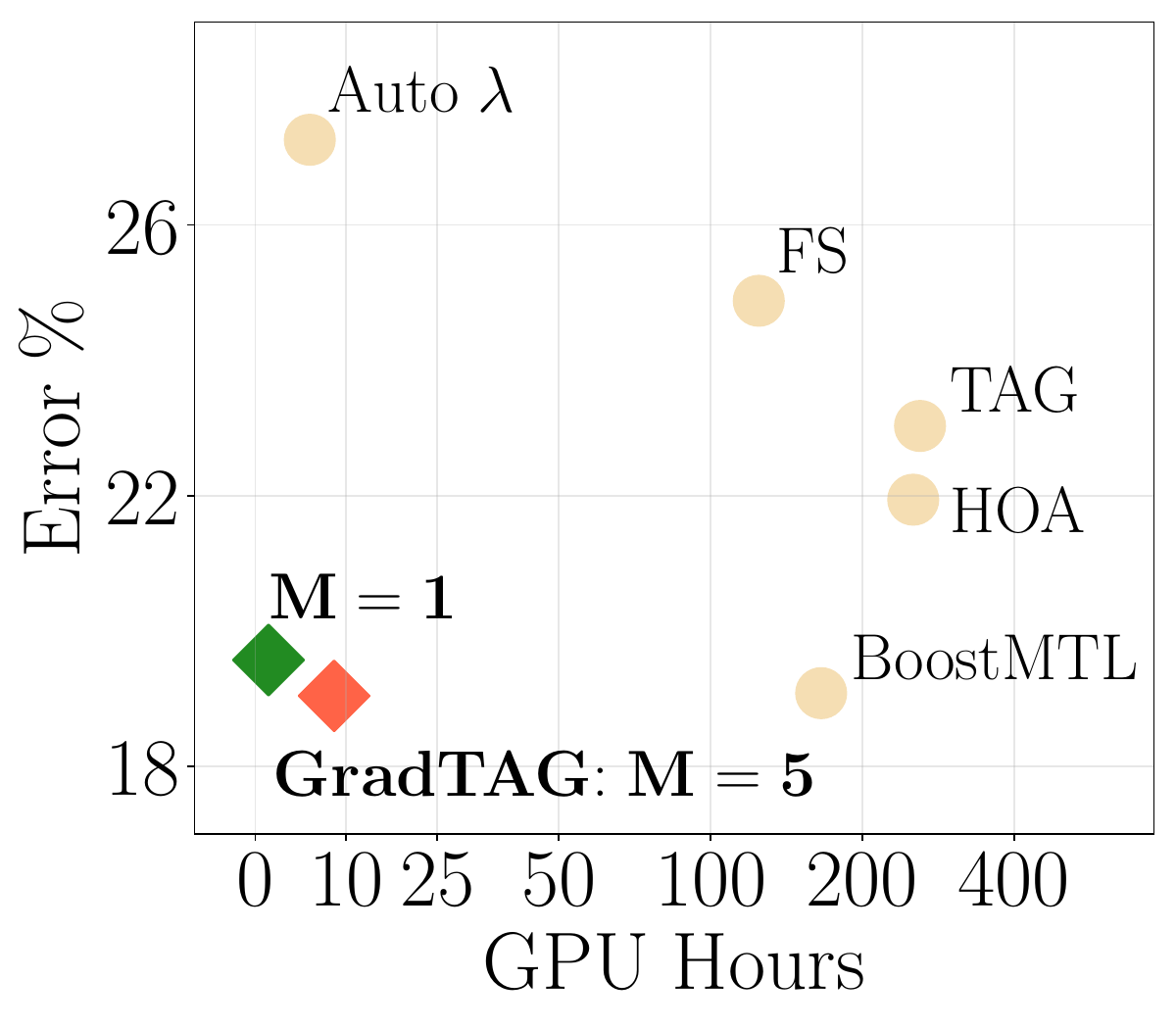}
    \end{minipage}
    \subcaption{Instruction fine-tuning of language models (On the RTE dataset)}\label{fig_compare_runtime_performance_finetuning}
    \end{subfigure}
    \caption{This figure illustrates the tradeoff between error rate and computation cost, measured by the number of FLOPs and GPU hours. Compared to multitask learning baselines, our approach achieves the Pareto optimal balance between error rate and computation cost. Recall that $M$ is the number of meta-initializations used in \acronym. The number of FLOPs is reported in the Giga FLOPs unit. For both settings, there are $n=100$ tasks.
    Our approach delivers comparable test accuracy to all baselines, using 32.8$\times$ fewer FLOPs and 5.2$\times$ less GPU hours than all baselines.}
\end{figure*}

\subsection{Comparison for Task Grouping}\label{sec_exp_tg}

\subsubsection{Baselines.} 
We set up many baselines covering heuristic solutions and recent optimization techniques.

{\it Forward Selection (FS) and Backward Selection (BS)} \cite{hastie2009elements}: These are standard approaches to perform subset selection, and we adapt them to task selection.

{\it Higher-Order Approximation (HOA)} \cite{standley2020tasks}: This algorithm computes pairwise task affinities between every two tasks and averages them to approximate higher-order affinities. It uses a branch-and-bound search algorithm to identify task groupings.

{\it Task Affinity Grouping (TAG)} \cite{fifty2021efficiently}: This approach computes the task affinity by evaluating the projecting one task’s gradients onto another task’s gradients during training. TAG also uses the branch-and-bound search algorithm to identify grouping.

{\it Auto-$\lambda$} \cite{liu2022auto}: This bilevel optimization technique balances the ratio of each task relative to the average objective of all tasks.

{\it BoostMTL} \cite{li2023boosting}: This approach computes higher-order task affinity between two tasks as the prediction loss of one task jointly trained with another task and a random subset of the remaining tasks, followed by spectral clustering to identify task groupings.

\subsubsection{Multi-label classification on graphs}

We report the result from applying our algorithm to overlapped community detection. We use our algorithm to estimate higher-order task affinity scores and then cluster the tasks.
We illustrate our results in Figure \ref{fig_compare_runtime_performance_community}, while deferring a full comparison to Appendix \ref{sec_experiment_details}. We use $1 - $ Macro $F_1$-score as the error rate on multi-label classification datasets. 
First, we confirm that our algorithm outperforms single-task learning that trains one model on each task by 2.1\% (as also evidenced by prior works on multitask learning \cite{zhang2021survey}).

We note that our algorithm reduces the error rate compared to all baselines while using \textbf{32.8}$\times$ fewer FLOPs and \textbf{5.2}$\times$ fewer GPU hours compared to the closest baseline, with $M = 5$

We can set $M=1$ for further speed up. This results in \textbf{71.4}$\times$ fewer FLOPs and \textbf{26.2}$\times$ less GPU hours than the closest baseline. The decrease in performance is only 0.3\%.%

\subsubsection{Fine-tuning language models} Next, we report the results from fine-tuning language models (T5 base) on text classification with $n = 100$ instructions. Again, we use our algorithm to estimate higher-order task affinity scores and apply SDP clustering to group tasks. We illustrate our results in Figure \ref{fig_compare_runtime_performance_finetuning} while deferring the complete comparison to Appendix \ref{sec_experiment_details}. We use $1 - $ accuracy as the error rate on the text classification datasets.
In particular, our algorithm outperforms single-task learning by 1.9\%. 

With $M=5$, our algorithm shows comparable performance to all baselines while using \textbf{48.2}$\times$ fewer FLOPs and \textbf{10.6}$\times$ less GPU hours.
By reducing $M$ to $1$, our algorithm further uses \textbf{105.4}$\times$ less FLOPs and \textbf{53.2}$\times$ less GPU hours, with only 0.5\% performance decrease.

\subsubsection{Discussion of clustering algorithms and hyper-parameters.}\label{sec_ab}

We discuss the design choices of Algorithm \ref{alg_approximate_grouping}. 
First, we study the SDP-based clustering vs. spectral and Lloyd's clustering. Across six datasets, SDP-based clustering outperforms these classical algorithms by an average of 1.2\%.
Next, we discuss the number of clusters $k$ and the rounding threshold $\lambda$.
We vary $k$ between $5, 10, 20,$ and $40$ (recall that $n = 100$). We note that the performance stabilizes when $k = 20$. Thus, we set $k = 20$.
For $\lambda$, we choose between $\frac 1 n$ and $\frac {10} n$, and select the value that results in $k$ clusters. 

Recall that Algorithm \ref{alg_approximate_grouping} also requires setting the number of subsets $m$ and each subset's size $\alpha$. Given $n=100$, we vary $m$ from $1000$ to $3000$ and observe that the result stabilizes when $m$ reaches $2000$. Thus, we set $m = 2000$. For $\alpha$, we choose it between $5, 10,$ and $20$. We choose $\alpha = 10$, as it yields better results than the rest.

\section{Conclusion}

This paper designs an efficient estimation algorithm to compute task affinity scores.
The main idea is to pre-train a meta-initialization on all tasks and then use the initialization's gradients to estimate the fine-tuned model parameters for a particular task combination using logistic regression.
A random projection is applied to the gradients to reduce the dimension of the regression.
Then, we design a robust clustering algorithm to accompany the task affinity estimation, which together yields an efficient multitask learning algorithm.
Experiments show that the algorithm can scale to as many as $500$ tasks on large graphs while accurately approximating the true task affinity scores.
The overall algorithm gives the best tradeoff between computation and performance compared to existing multitask learning methods.

We discuss several aspects of future work. First, it would be interesting to design novel dimension reduction and clustering methods in \acronym{}, and they will likely depend on downstream applications.
Second, it would be interesting to see if boosting could be used in branching neural networks, another type of multitasking architecture that trains a joint model on all tasks. A naive application of our method to group at the layer level is to start with a joint model and gradually split layers into task groups from input to output.
In each layer, the estimation procedure (based on layer-level features) may be used to compute task affinity scores and then group them accordingly.
This would help reduce the final model to a single neural network.

\section*{Acknowledgement}

Thanks to the anonymous referees for their comments.
This research is partly supported by NSF award IIS-2412008 and Northeastern University's Transforming Interdisciplinary Experiential Research (TIER) 1: Seed Grant/Proof of Concept Program.
D. Li was also partially funded by JPMorgan Chase \& Co. Any views or opinions expressed herein are solely those of the authors listed, and may differ from the views and opinions expressed by JPMorgan Chase \& Co. or its Affiliates.

\begin{refcontext}[sorting=nyt]
\balance
\printbibliography
\end{refcontext}

\onecolumn
\appendix

\section{Proof of Proposition \ref{prop_jl}}\label{proof_prop}

For this proof, we shall focus on binary classification. As discussed in Remark \ref{rem_ext}, the extension to multiple classifications requires additional notations, but the proof is straightforward.

\begin{proof}[Proof of Proposition \ref{prop_jl}]
    Recall that we define the minimizer for the logistic regression after random projection as $\hat W_d$.
    To make it clear, we annotate the vector with its dimension so that it is easy to distinguish.
    $\hat W_d$ is the minimizer of the following problem
    \begin{align}
        \min h_1(W) = \frac 1 {n_{\cS}} \sum_{i=1}^{n_{\cS}} \log \left(  1 + \exp\left ( - y_i g_i^{\top} P W + b_i \right)\right), \text{ for } W \in \real^d, \label{eq_hatWd}
    \end{align}
    where we recall that $P$ is a $p$ by $d$ random projection matrix,
    $g_i = \nabla_W f_{\theta^{\star}}(x_i, y_i)$,
    and $b_i = - y_i f_{\theta_\star}(x_i, y_i)$.
    
    Now, we define an intermediate solution $\overline W_p$ as follows
    \begin{align}
        \min h_2(W) = \frac 1 {n_{\cS}} \sum_{i=1}^{n_{\cS}} \log \left( 1 + \exp\left ( - y_i g_i^{\top} P P^{\top} (W - \theta^{\star}) + b_i \right) \right). \label{eq_wbar}
    \end{align}
    We can see that the function value of $\hat W_d$ for equation \eqref{eq_hatWd} must be less than the function value of $\overline W_p$ for equation \eqref{eq_wbar}.
    This is because the latter is a special case of the former.
    Thus, we first have that
    \begin{align}
        h_1(\hat W_d) \le h_2(\overline W_p). \label{eq_h1_h2}
    \end{align}
    
    Next, we compare $h_2(\overline W_p)$ with $\hat L(W^{\star})$.
    Recall that $W^{\star}$ is the minimizer for the following problem:
    \begin{align}\label{eq_star}
        \min \hat L(S) = \frac 1 {n_{\cS}} \sum_{i=1}^{n_{\cS}} \log\left( 1 + \exp\left(- y_i f_W(x_i, y_i) \right) \right).
    \end{align}

    We note that there are two sources of errors in this comparison.
    The first is the error between $f_W(x_i, y_i)$ and its Taylor's expansion $g_i^{\top}(W - \theta^{\star}) + b_i$.
    The second is the error introduced by the random projection.
    
    To make it easier to compare between equation \eqref{eq_star} with \eqref{eq_wbar}, let us expand the former as follows:
    \begin{align}
        & \min \frac 1 {n_{\cS}} \sum_{i=1}^{n_{\cS}} \log\left( 1 + \exp\left( - y_i f_W(x_i, y_i) \right) \right) \\
        =& \min \frac 1 {n_{\cS}} \sum_{i=1}^{n_{\cS}} \log \left( 1 + \exp\left( - y_i (b_i + g_i^{\top} (W - \theta^{\star}) + \epsilon_i) \right)\right) \tag{we use $\epsilon_i$ to denote Taylor's expansion error for $x_i, y_i$} \\
        =& \min \frac 1 {n_{\cS}} \sum_{i=1}^{n_{\cS}} \log\left( 1 + \exp\left( - y_i g_i^{\top} \left( (PP^{\top} + (\id - PP^{\top}) \right) (W - \theta^{\star}) + b_i \right) \right) \label{eq_nS1}
    \end{align}
    Let us denote
    \begin{align}
        \tilde \epsilon_i = g_i^{\top} (\id - PP^{\top}) (W - \theta^{\star}).
    \end{align}
    
    Thus, we can see that the difference between $W^{\star}$ and $\hat W$ can be attributed to the error term $\tilde \epsilon_i$.
    We rewrite equation \eqref{eq_nS1} as follows to make it clear
    \begin{align}
        \min \frac 1 {n_{\cS}}\sum_{i=1}^{n_{\cS}} \log\left( 1 + \exp(- y_i g_i^{\top} PP^{\top} (W - \theta^{\star}) + b_i + \tilde \epsilon_i) \right).
    \end{align}
    
    Now we bound the magnitude of $\tilde \epsilon_i$.
    Our idea is to use the fact that the logistic loss is $1$-Lipschitz continuous (to see that, one just needs to verify that
    \begin{align*} \abs{\log(1 + \exp(-x)) - \log(1 + \exp(-y))} \le \abs{x - y}. \end{align*}
    
    With this, we could then show that $h_2(\overline W_p)$ and $\hat L(W^{\star})$ are relatively close to each other.
    By definition, $h_2(\overline W_p) \le h_2(W^{\star})$.
    Additionally,
    \begin{align}
        & \abs{h_2(W^{\star}) - \hat L(W^{\star})} \\
        =& \frac 1 {n_{\cS}} \sum_{i=1}^{n_{\cS}} \abs{\log\left(1 + \exp(- y_i g_i^{\top} PP^{\top} (W - \theta^{\star})) + b_i\right) - \log\left(1 + \exp(- y_i g_i^{\top} PP^{\top} (W - \theta^{\star}) + b_i + \tilde \epsilon_i)\right)} \\
        \le & \frac 1 {n_{\cS}}\sum_{i=1}^{n_{\cS}} \abs{\tilde \epsilon_i}. \label{eq_nS_ts}
    \end{align}
    
    Recall from the assumption that the averaged Taylor's expansion error is at most $\delta$.
    Thus,
    \[ \frac 1 {n_{\cS}} \sum_{i=1}^{n_{\cS}} \abs{b_i} \le \delta. \]
    
    Next, by the Johnson-Lindenstrauss transformation \cite{johnson1984extensions} (For a modern exposition, see, e.g., lectures notes by Gregory Valiant: \url{https://theory.stanford.edu/~valiant/teaching/CS265/lectureNotes/l9.pdf}), provided that $d = O\big(\frac{\log p}{\epsilon^2}\big)$, we have
    \begin{align*}
        \abs{\inner{g_i}{W - \theta^{\star}} - \inner{P g_i}{P (W  - \theta^{\star})}} 
        \le \epsilon \bigabs{\inner{g_i}{W - \theta^{\star}}}
        \le {2 G D }{\epsilon}.
    \end{align*}

    Thus, applying the above two steps back into equation \eqref{eq_nS_ts}, we can now conclude that
    \begin{align}
        \abs{h_2(W^{\star}) - \hat L(W^{\star})}
        \le \delta + {2 G D }{\epsilon}. \label{eq_h2_hatL}
    \end{align}
    Applying equation \eqref{eq_nS_ts} back into equation \eqref{eq_h1_h2}, we can now conclude that
    \begin{align}
        h_1(\hat W_d) \le h_2(\overline W_p) \le h_2(W^{\star})
        \le \hat L(W^{\star}) + \delta + {2G D }{\epsilon}.\label{eq_final1}
    \end{align}

    To finish the proof, we can apply the above calculation to compare between $h_1(\hat W_d)$ and $\hat L(P \hat W_d +  \theta^{\star}))$, to get that
    \begin{align}
        \abs{h_1(\hat W_d) - \hat L(P \hat W_d + \theta^{\star}))}
        \le \delta + {2 GD } {\epsilon}.\label{eq_final2}
    \end{align}
    Combining equations \eqref{eq_final1} and \eqref{eq_final2} together, we finally conclude that
    \begin{align}
        \hat L(P \hat W_d + \theta^{\star})
        \le \hat L(W_p^{\star}) + 2\delta + {4 G D }{\epsilon}.
    \end{align}
    This completes the proof of Proposition \ref{prop_jl}.
\end{proof}

It would also be interesting to examine Taylor's expansion up to the Hessian in equation \eqref{eq_taylor}.
This requires additional computation of Hessian vector products.
After that, one needs to solve a quadratic program that depends on the Hessian matrix. This is left for future work.

Lastly, there is a line of work on model agnostic meta-learning and continual learning (See, e.g., survey article by \citet{hospedales2021meta}).
It would be interesting to see if our method can be applied to this setting (i.e. estimating fine-tuned model parameters without backpropagation). This is a promising direction for future work.

\section{Data Matrix for Example \ref{ex_planted}}

For completeness, we report the data matrix $T$ used to generate the clusters in Example \ref{ex_planted}.

\begin{align*}
    T = \left[\begin{array}{cccccc}
            7 & 7 & 6 & 6 & 5 & 5 \\
            7 & 7 & 6 & 6 & 5 & 5 \\
            6 & 6 & 20 & 20 & 19 & 19 \\       
            6 & 6 & 20 & 20 & 19 & 19 \\              
            5 & 5 & 19 & 19 & 20 & 20 \\                     
            5 & 5 & 19 & 19 & 20 & 20 \\                            
    \end{array}\right].
\end{align*}

\section{Additional Experiments}\label{sec_experiment_details}

\subsection{Implementations} 

\subsubsection{Models.} We use the SIGN model \citep{frasca2020sign} as the encoder in the multitask learning models on the community detection tasks. 
The encoder involves three layers, each with a fixed width of 256 neurons. Our choice of this encoder is without loss of generality, and our observations also apply to other encoders. We construct the node features from the VERSE embedding \cite{tsitsulin2018verse}, which encodes personalized PageRank vectors known as useful for community detection. We use the same number of model parameters for the Auto-$\lambda$ and MoE baselines as for the other task grouping baselines. 

On text classification tasks, we use T5-Base as the base model. We use LoRA fine-tuning \cite{hu2022lora}, which is a parameter-efficient fine-tuning method. 
For each dataset, we evaluate the average performance over all $100$ instructions. In our approach, we view one instruction as one task. We train the model with the AdamW optimizer with a learning rate of $5 \times 10^{-5}$ for 5,000 gradient update steps. We vary the rank of LoRA between 4, 8, 16, 32, 64, and 128. We find that a rank of $4$ leads to the best performance; thus, we set the rank as 4 in our experiments. 

\subsubsection{Baselines.} We describe the details of Forward selection: Start from all empty groups. Enumerate through all tasks by adding one task to one of the existing groups which results in the best average performance. 
In Backward selection, we start from a group with all tasks and other groups as empty. Enumerate through all tasks by removing one task from the first group and assigning the task to the group which results in the best average performance. 

To be representative in terms of relative improvement, we also compare the performance with conventional methods for community detection, including BigClam \cite{yang2013overlapping}, Louvain clustering \cite{blondel2008fast},  Network embedding methods including Node2Vec \cite{grover2016node2vec}, and VERSE \cite{tsitsulin2018verse}, and GNN-based community detection methods including MinCutPool \cite{bianchi2020spectral} and Deep Modularity Networks \cite{tsitsulin2023graph}. We noted that our approach outperforms the community detection baselines. The comparison results are reported in Table \ref{tab_community_detection}.

\subsection{Omitted results}

\subsubsection{Additional task grouping results.}  We illustrate the tradeoff between the error rate and the computation cost in terms of FLOPs and GPU hours of the other four datasets in our experiments in Figure \ref{fig_compare_runtime_performance_2}. We observe that our approach, \acronym{}, consistently achieves Pareto optimal in the evaluation metrics. While achieving the comparable performance of the best baseline, our approach reduces the computation cost by  32.8$\times$ and 5.2$\times$ in terms of FLOPs and GPU hours, respectively. Compared to the baselines using the same level of computation cost, our approach improves the MTL performance over the baselines by 4\% on average. 

\begin{figure*}[!ht]
    \centering
    \begin{subfigure}[b]{0.48\textwidth}
    \begin{minipage}[b]{0.49\textwidth}
        \centering
        \includegraphics[width=\textwidth]{./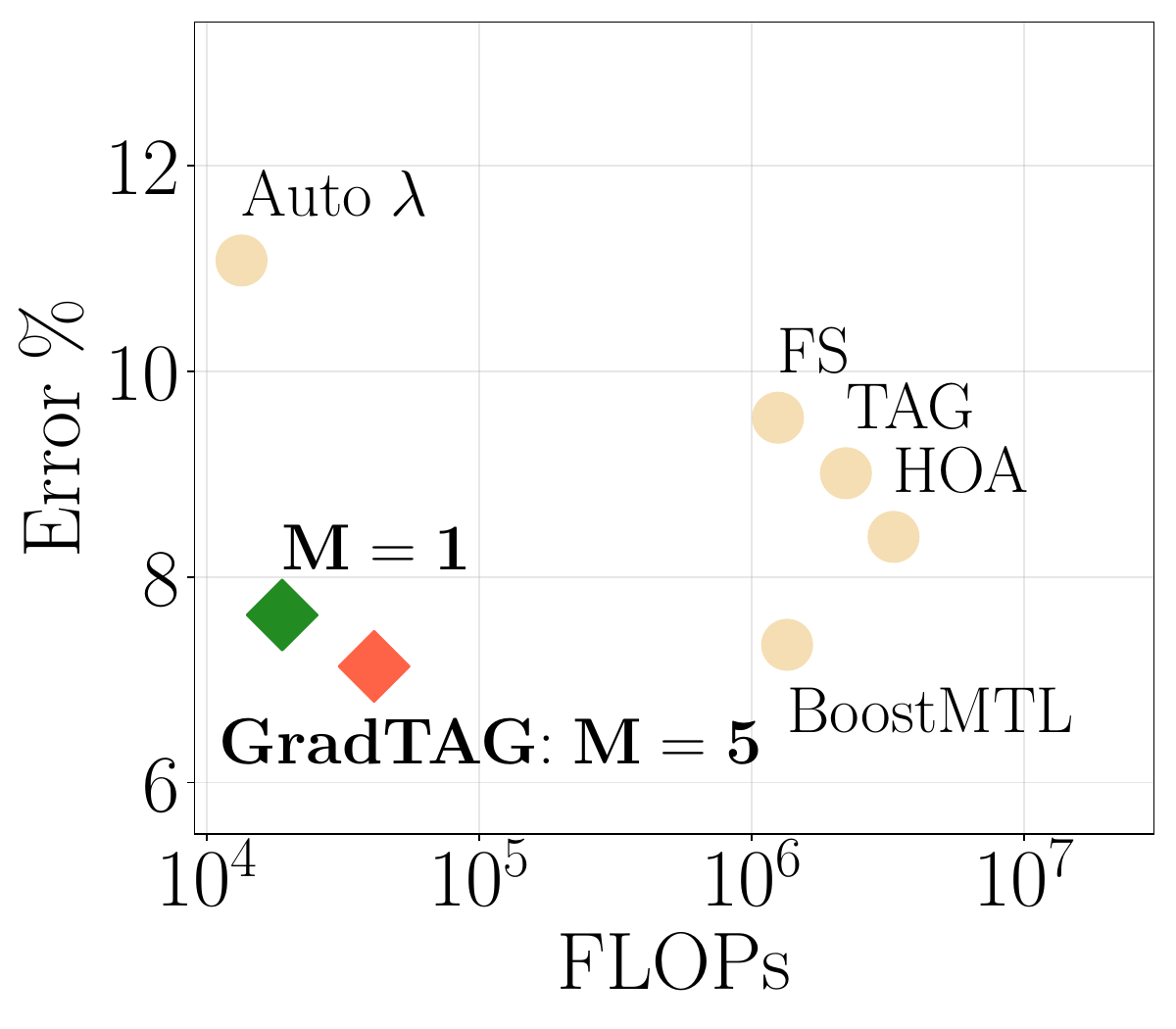}
    \end{minipage}
    \begin{minipage}[b]{0.49\textwidth}
        \centering
        \includegraphics[width=\textwidth]{./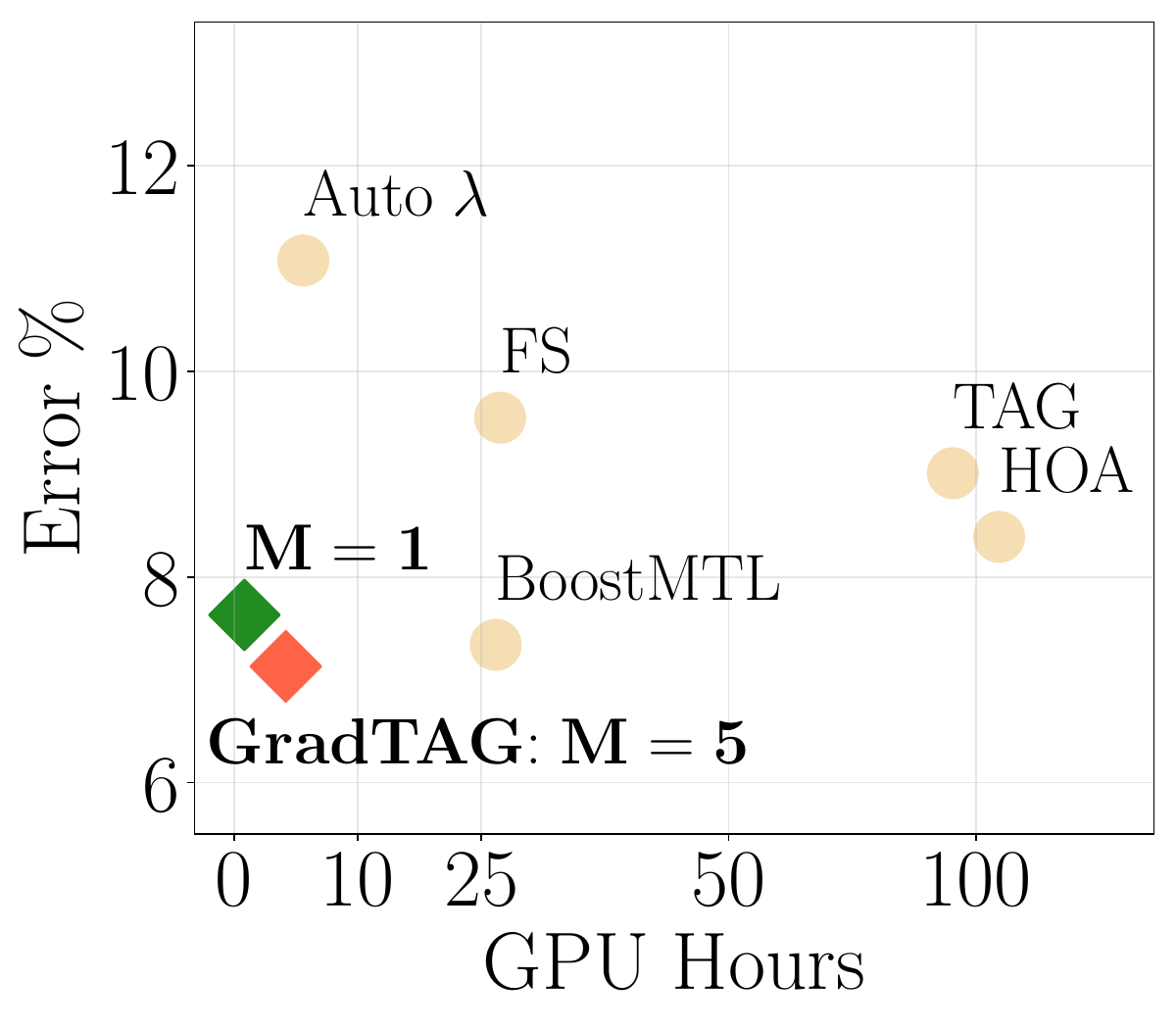}
    \end{minipage}
    \subcaption{Multi-label classification on graphs (The Amazon network)}
    \end{subfigure}\hfill
    \begin{subfigure}[b]{0.48\textwidth}
    \begin{minipage}[b]{0.49\textwidth}
        \centering
        \includegraphics[width=\textwidth]{./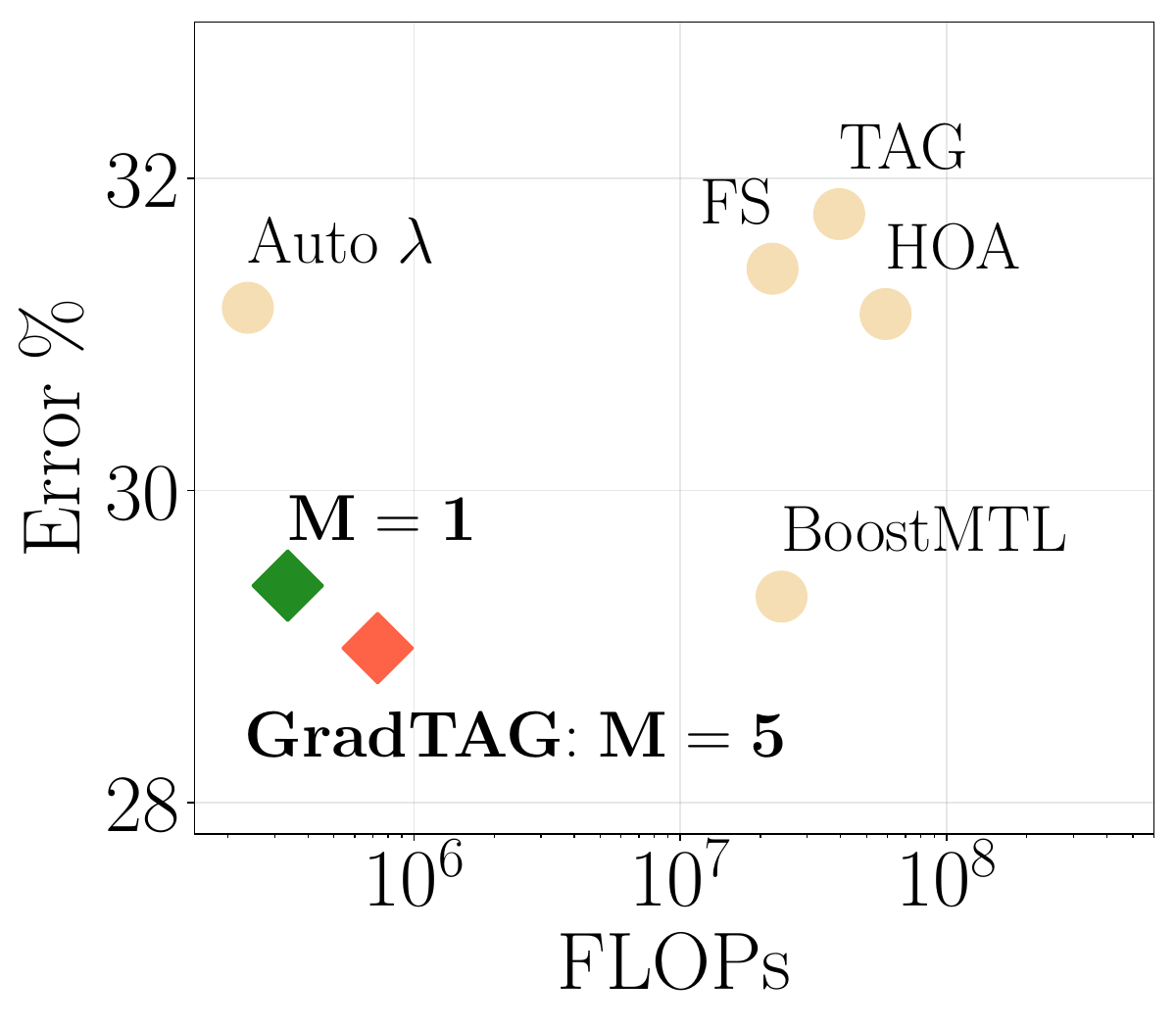}
    \end{minipage}
    \begin{minipage}[b]{0.49\textwidth}
        \centering
        \includegraphics[width=\textwidth]{./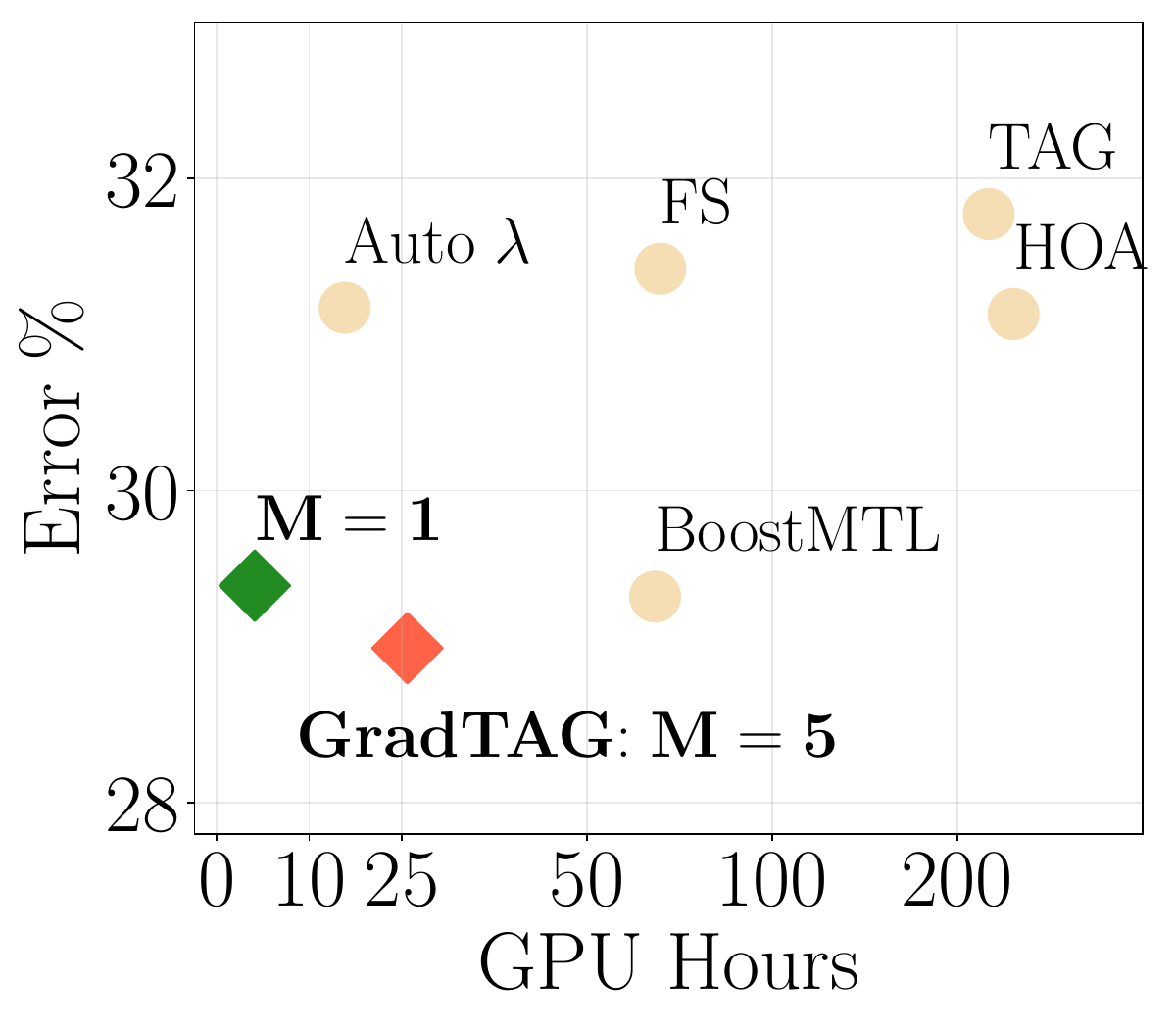}
    \end{minipage}
    \subcaption{Multi-label classification on graphs (The DBLP network)}
    \end{subfigure}
    \begin{subfigure}[b]{0.48\textwidth}
    \begin{minipage}[b]{0.49\textwidth}
        \centering
        \includegraphics[width=\textwidth]{./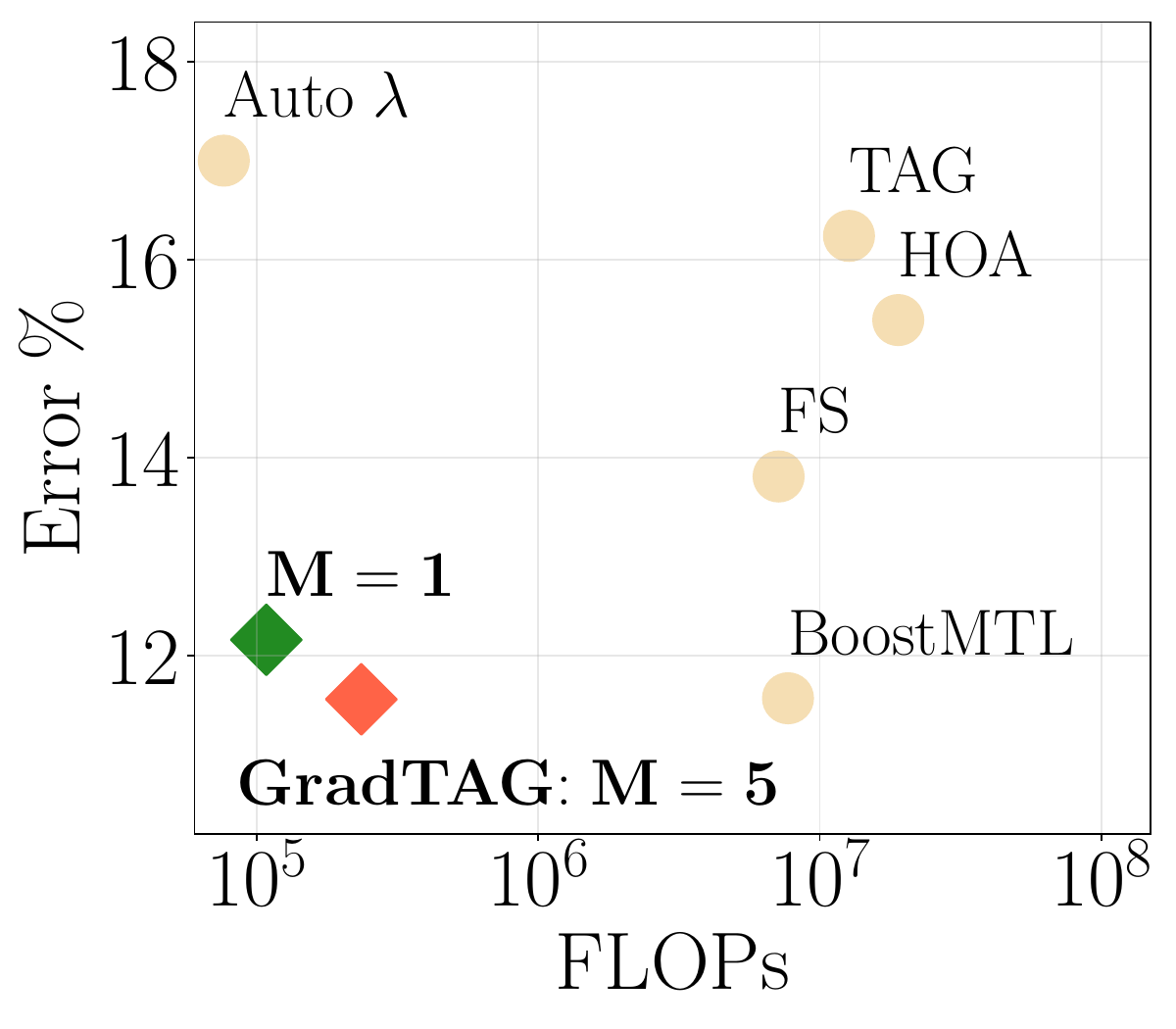}
    \end{minipage}
    \begin{minipage}[b]{0.49\textwidth}
        \centering
        \includegraphics[width=\textwidth]{./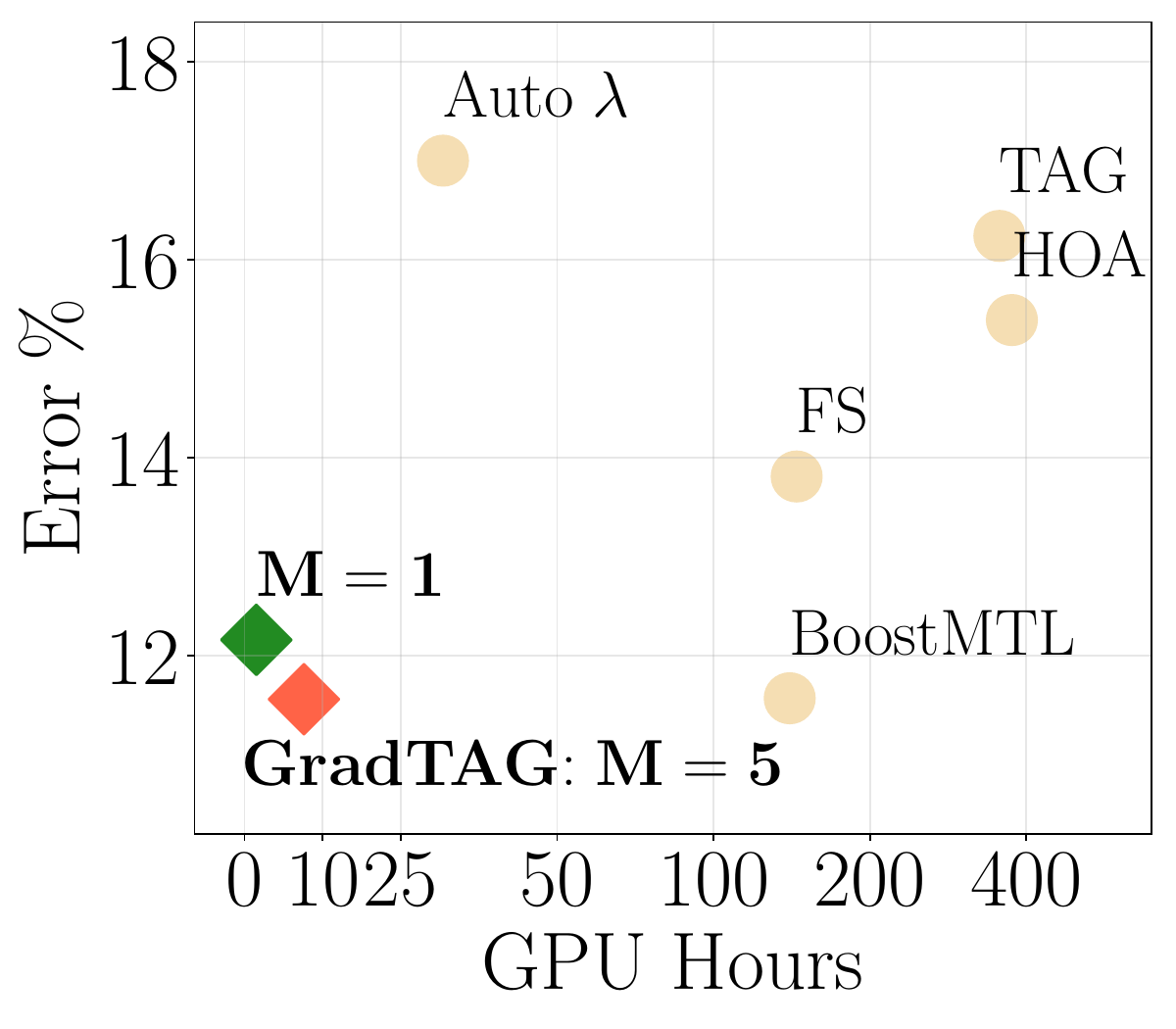}
    \end{minipage}
    \subcaption{Multi-label classification on graphs (The LiveJournal network)}
    \end{subfigure}\hfill
    \begin{subfigure}[b]{0.48\textwidth}
    \begin{minipage}[b]{0.49\textwidth}
        \centering
        \includegraphics[width=\textwidth]{./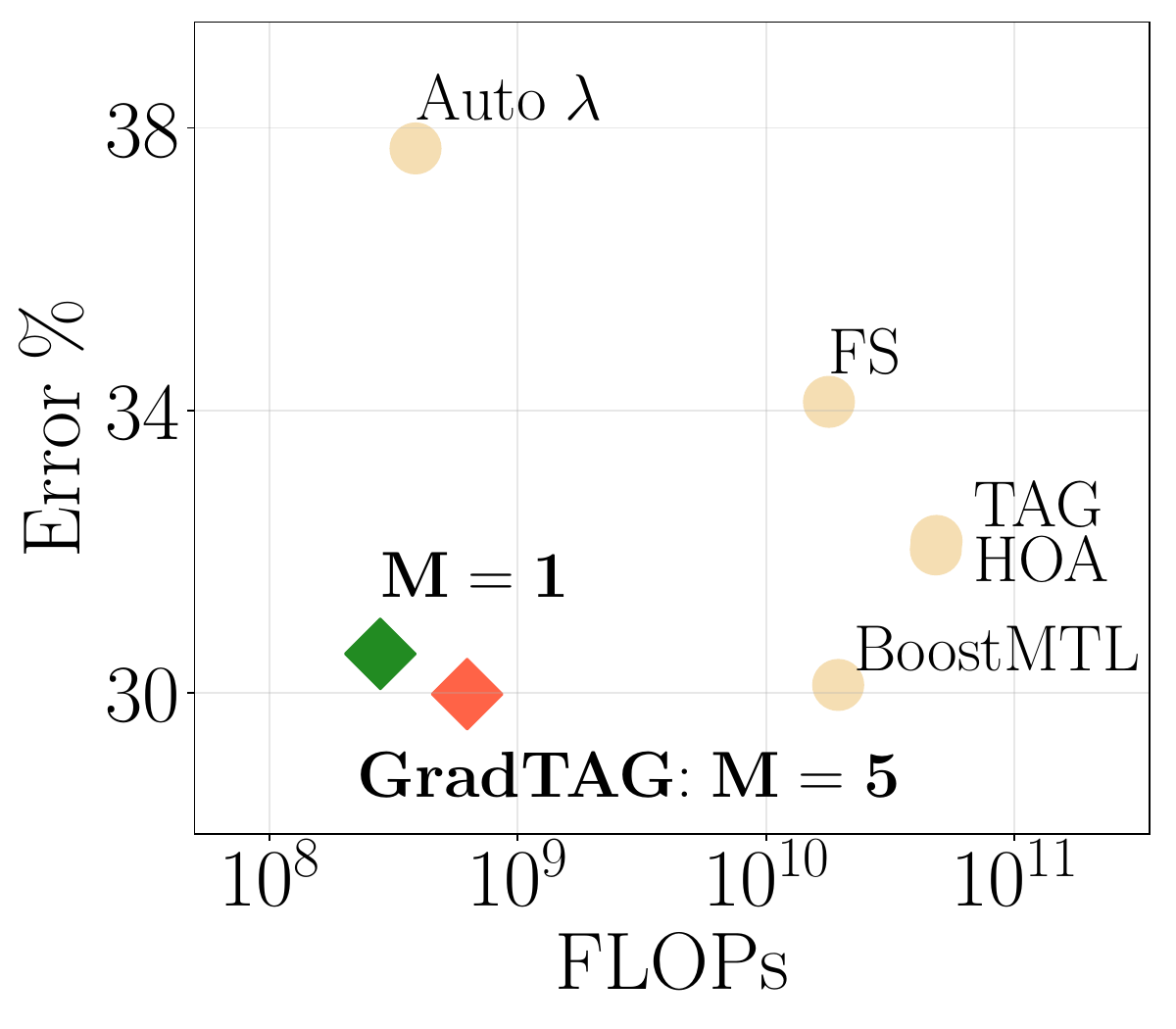}
    \end{minipage}
    \begin{minipage}[b]{0.49\textwidth}
        \centering
        \includegraphics[width=\textwidth]{./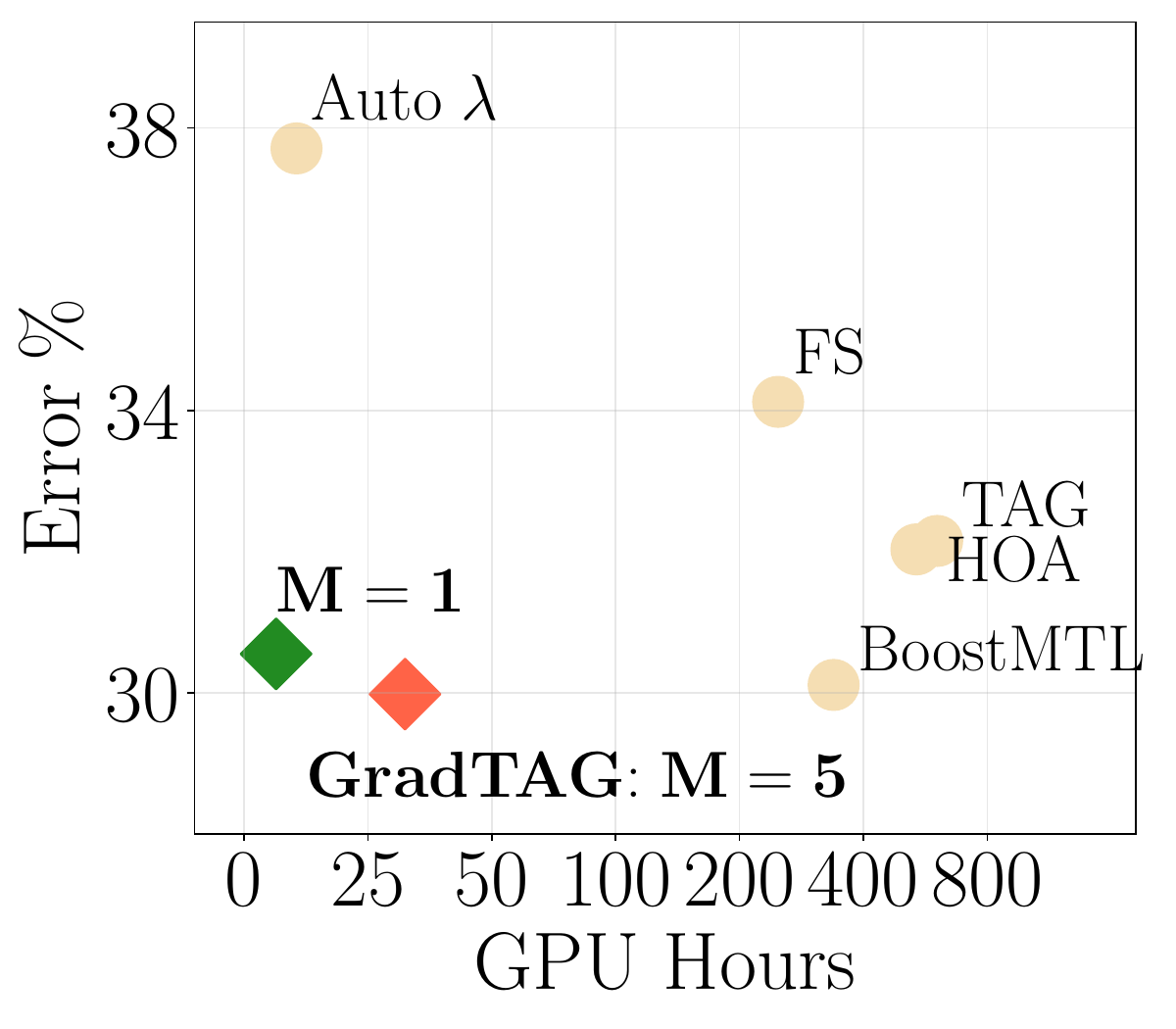}
    \end{minipage}
    \subcaption{Instruction fine-tuning of language models (On the WiC dataset)}
    \end{subfigure}
    \caption{This figure illustrates the tradeoff between the error rates and computation cost in terms of FLOPs and GPU hours on four datasets omitted in the main text. 
    Our approach, \acronym{}, consistently achieves the Pareto optimal, delivering comparable test accuracy to other MTL baselines and using 32.8$\times$ fewer FLOPs and 5.2$\times$ less GPU hours than other baselines. 
    $M$ denotes the number of meta-initializations used in our approach. 
    }\label{fig_compare_runtime_performance_2}
\end{figure*}

\subsubsection{Correlation Between Estimated Affinities and True Scores}

Our results show that task grouping with our estimated task affinities can achieve competitive performance with the previous method that uses the fully computed higher-order task affinities. To explain these results, we hypothesize that the estimated task affinities are highly correlated with the true task affinities, resulting in similar task groupings and, consequently, comparable performance.  We compute the Spearman correlation between the estimated and true task affinities corresponding to one task $j$, i.e., the correlation between $[T_{1, i}, \ldots, T_{n, i}]$ and $[T^\star_{1, i}, \ldots, T^\star_{n, i}]$. 
We evaluated on the YouTube network of 100 tasks. We show that using $M=1$ meta-initialization, the estimated task affinities have a \textbf{0.91} correlation with the true scores averaged over all tasks. With $M=5$, the estimated scores have a \textbf{0.96} correlation with true scores.

\subsection{Tables of Full Comparisons}\label{sec_compare}

Here, we report the complete results for Section \ref{sec_exp_tg}.

\begin{table*}[h!]
\centering
\caption{We report the Macro $F_1$-score, computation cost as FLOPs, and runtime as GPU hours, on community detection tasks using four social networks. We compare our approach with MTL optimization methods, feature subset selection methods, and graph embedding methods. For each experiment, we report the results averaged over three random seeds and include their standard deviations.}\label{tab_community_detection}
{
\begin{tabular}{@{}lcccccccc@{}}
\toprule
Dataset   & {Amazon} & {Youtube} & {DBLP} & {LiveJournal}  \\ 
Nodes & {3,225} & {16,751} & {57,368} & {18,433}  \\
Edges & {20,524} & {104,513} & {420,122} &{1,397,580}  \\
\midrule
\textbf{Macro $F_1$-score}: Community Detection Methods \\
\midrule
BigClam     & 27.30 $\pm$ 0.26 & 18.84 $\pm$ 0.18  & 13.46 $\pm$ 0.11  & 22.50 $\pm$ 0.31 \\
Louvain clustering  & 60.95 $\pm$ 0.19  & 29.03 $\pm$ 0.34  & 36.73 $\pm$ 0.34  & 64.08 $\pm$ 0.17 \\
Node2Vec    & 39.05 $\pm$ 0.10 & 32.44 $\pm$ 0.18 & 28.72 $\pm$ 0.10  & 50.40 $\pm$ 0.29 \\
VERSE       & 61.00 $\pm$ 0.32 & 38.17 $\pm$ 0.12 & 53.48 $\pm$ 0.24  & 58.71 $\pm$ 0.48 \\
MinCutPool  & 84.24 $\pm$ 0.19 & 44.28 $\pm$ 0.49 & 67.49 $\pm$ 0.96 & 81.87 $\pm$ 1.06\\
Deep Modularity Networks  & 83.30 $\pm$ 1.07 & 43.58 $\pm$ 0.77 & 66.32 $\pm$ 0.15 & 79.84 $\pm$ 0.80\\
\midrule
\textbf{Macro $F_1$-score} \\
\midrule
Single task learning & 92.26 $\pm$ 4.62 & 46.69 $\pm$ 2.44 & 68.32 $\pm$ 4.18 & 87.78 $\pm$ 4.04\\
Multi-Gate MoE  & 88.92 $\pm$ 6.65  & 44.65 $\pm$ 4.28  & 68.83 $\pm$ 4.06  & 83.08 $\pm$ 4.89 \\
Auto-$\lambda$  & 88.08 $\pm$ 4.04  & 44.42 $\pm$ 2.24  & 68.95 $\pm$ 2.28  & 83.56 $\pm$ 3.21 \\ 
Forward Selection  & 90.45 $\pm$ 3.63  & 47.62 $\pm$ 2.84  & 68.58 $\pm$ 2.96  & 86.19 $\pm$ 2.61 \\
Backward Selection & 90.41 $\pm$ 5.98  & 47.68 $\pm$ 3.01  &  68.63 $\pm$ 2.97 & 85.91 $\pm$ 4.18 \\
Task Affinity Grouping & 90.99 $\pm$ 4.06 & 45.23 $\pm$ 2.73 & 68.23 $\pm$ 3.24 & 83.76 $\pm$ 3.77 \\
Higher-Order Approximation  & 91.61 $\pm$ 3.86 & 46.34 $\pm$ 2.57 & 68.87 $\pm$ 2.23 & 84.61 $\pm$ 2.56   \\
BoostMTL & {92.66 $\pm$ 4.85} & {49.62 $\pm$ 2.26} &  {70.68 $\pm$ 2.65}  & {\textbf{88.43} $\pm$ 2.70}   \\ \midrule
\acronym{} ($M=1$) & 92.37 $\pm$ 3.26  & 49.22 $\pm$ 2.56  & 70.61 $\pm$ 2.21 & 87.84 $\pm$ 2.44 \\
\acronym{} ($M=5$) & \textbf{92.87} $\pm$ 3.80  & \textbf{49.66} $\pm$ 2.24  & \textbf{71.01} $\pm$ 2.45 & \textbf{88.44} $\pm$ 2.80 \\
\midrule
\textbf{\# Giga FLOPs} \\
\midrule
Single task learning & 6.43 $\times 10^4$ & 3.34 $\times 10^5$ & 1.14 $\times 10^6$ & 3.67 $\times 10^5$ \\
Multi-Gate MoE  & 1.34 $\times 10^4$ & 6.94 $\times 10^4$ & 2.38 $\times 10^5$ &	7.63 $\times 10^4$ \\
Auto-$\lambda$       & 2.40 $\times 10^4$ & 1.36 $\times 10^5$ & 4.58 $\times 10^5$ & 1.44 $\times 10^5$ \\ 
Forward Selection  & 1.25 $\times 10^6$ & 6.49 $\times 10^6$ & 2.22 $\times 10^7$ & 7.14 $\times 10^6$ \\
Backward Selection & 1.35 $\times 10^6$  & 7.01 $\times 10^6$ & 2.40 $\times 10^7$ & 7.71 $\times 10^6$ \\
Task Affinity Grouping & 2.22 $\times 10^6$ & 1.15 $\times 10^7$ & 3.95 $\times 10^7$ & 1.27 $\times 10^7$ \\
Higher-Order Approximation  & 3.32 $\times 10^6  $ & 1.72 $\times 10^7$ & 5.90 $\times 10^7$ & 1.90 $\times 10^7$ \\
BoostMTL & 1.35 $\times 10^6$ & 7.01 $\times 10^6$ & 2.40 $\times 10^7$ & 7.71 $\times 10^6$ \\ \midrule
\acronym{} ($M=1$) & \textbf{1.89} $\times 10^4$ & \textbf{9.82} $\times 10^4$ & \textbf{3.36} $\times 10^5$ & \textbf{1.08} $\times 10^5$ \\
\acronym{} ($M=5$) & 4.11 $\times 10^4$ & 2.13 $\times 10^5$ & 7.31 $\times 10^5$ & 2.35 $\times 10^5$ \\
\midrule
\textbf{GPU Hours} \\
\midrule
Single task learning & 3.08 H & 4.27 H & 12.88 H & 4.30 H \\
Mixture-of-Experts & 7.01 H & 10.39 H & 17.28 H & 31.74 H \\
Auto-$\lambda$ & 8.00 H & 10.27 H & 18.96 H & 33.35 H  \\
Forward Selection & 26.91 H & 49.61 H & 53.38 H & 88.28 H \\
Backward Selection & 37.90 H & 62.89 H & 69.39 H & 105.94 H \\
Task Affinity Grouping & 92.68 H & 199.99 H & 224.22 H & 305.69 H \\
Higher-Order Approximation & 87.35 H & 197.26 H & 207.56 H & 294.69 H \\
BoostMTL & 24.46 H &	52.79 H & 59.19 H & 87.17 H \\ \midrule
\acronym{} ($M=1$) & \textbf{1.05 H} & \textbf{2.39 H} & \textbf{5.15 H} & \textbf{1.90 H} \\
\acronym{} ($M=5$) & 5.24 H & 11.95 H & 25.75 H & 9.50 H \\
\bottomrule
\end{tabular}
}
\end{table*}

\begin{table*}[t!]
\centering
\caption{We report the accuracy scores on the development set averaged over all instructions, on two sentence classification tasks from SuperGLUE. We compare our approach with MTL optimization methods and feature subset selection methods. For each experiment, we report the results averaged over three random seeds and include their standard deviations.}\label{tab_instruction_tuning_results}
{
\begin{tabular}{@{}lcccccc@{}}
\toprule
Dataset   &  RTE & WiC \\ %
\midrule
\textbf{Accuracy} \\ 
\midrule
Single task learning & 79.44$\pm$1.33 & 68.93$\pm$1.49\\
Mixture-of-Experts & 73.06$\pm$1.61 & 62.17$\pm$1.68 \\ %
Auto-$\lambda$   & 72.74$\pm$2.40 & 62.29$\pm$2.93 \\ %
Forward Selection   & 75.12$\pm$1.26 & 65.88$\pm$2.19 \\ %
Backward Selection & 75.09$\pm$1.68 &  66.44$\pm$1.98 \\
Task Affinity Grouping & 76.97$\pm$1.83 & 67.85$\pm$1.31\\
Higher-Order Approximation & 78.06$\pm$2.14 & 67.97$\pm$ 1.35 \\
BoostMTL &  {80.92$\pm$1.85} & {69.89$\pm$0.87}  \\ %
\midrule 
\acronym{} ($M=1$) & {80.43$\pm$1.23} & 69.45$\pm$1.21\\
\acronym{} ($M=5$) & {\textbf{80.96}$\pm$1.61} & \textbf{70.02}$\pm$1.35\\
\midrule
\textbf{\# Giga FLOPs} \\ 
\midrule
Single task learning & 4.48 $\times 10^8$ & 89.73 $\times 10^8$ \\
Mixture-of-Experts  & 8.94 $\times 10^7$ & 1.94 $\times 10^8$\\ 
Auto-$\lambda$      & 1.79 $\times 10^8$ & 3.88 $\times 10^8$\\ 
Forward Selection   & 8.27 $\times 10^9$ & 1.79 $\times 10^{10}$ \\ 
Backward Selection  & 8.94 $\times 10^9$ & 1.94 $\times 10^{10}$\\
Task Affinity Grouping & 2.23 $\times 10^{10}$ & 4.85 $\times 10^{10}$\\
Higher-Order Approximation & 2.22 $\times 10^{10}$ & 4.82 $\times 10^{10}$\\
BoostMTL            & 1.34 $\times 10^{10}$ & {1.95} $\times 10^{10}$\\
\midrule 
\acronym{} ($M=1$) & \textbf{1.27 $\times 10^8$} & \textbf{2.80} $\times 10^{8}$\\
\acronym{} ($M=5$) & 2.78 $\times 10^8$ & 6.26 $\times 10^8$\\
\midrule
\textbf{GPU Hours} \\ 
\midrule
Single task learning & 19.4 H & 23.8 H \\
Mixture-of-Experts & 7.9 H & 10.6 H \\ 
Auto-$\lambda$   &  9.0 H & 10.9 H \\ 
Forward Selection   & 114.5 H & 247.8 H \\  
Backward Selection & 123.8 H & 267.9  H \\
Task Affinity Grouping & 309.6 H & 669.9  H \\
Higher-Order Approximation & 307.6 H & 665.7 H \\
BoostMTL &   186.2 H & 269.0 H \\
\midrule 
\acronym{} ($M=1$) & \textbf{3.5 H} & \textbf{6.49 H} \\
\acronym{} ($M=5$) & 17.5 H & 32.50 H \\
\bottomrule %
\end{tabular}
}
\end{table*}

\end{document}